\documentclass[preprint,12pt,authoryear]{article}

\usepackage{lastpage}
\usepackage[T1]{fontenc}
\usepackage[noend]{algpseudocode}
\usepackage{tikz,pgfplots,pgfplotstable,enumitem,amsmath,amssymb}
\usepackage{amsthm}
\usepackage[margin=1in]{geometry}
\usepackage{xspace,xcolor,algorithm,subfig}
\usepackage{natbib}
\usepackage{authblk}
\pgfplotsset{compat=1.14}

\newcommand{\rlsone}{\textproc{RLS$_1$}\xspace}
\newcommand{\rlstwo}{\textproc{RLS$_2$}\xspace}
\newcommand{\rlsk}{\textproc{RLS$_k$}\xspace}
\newcommand{\rlsm}{\textproc{RLS$_m$}\xspace}

\newcommand{\ARGk}{\textsc{ARG$_k$}\xspace}
\newcommand{\LeadingOnes}{\textsc{LeadingOnes}\xspace}
\newcommand{\OneMax}{\textsc{OneMax}\xspace}
\newcommand{\tmax}{\tau_{\max}}
\newcommand{\tend}{\tau_\mathrm{end}}
\newcommand{\tinit}{\tau_\mathrm{0}}
\newcommand{\popt}{p_\mathrm{opt}}
\newcommand{\pnop}{p_\mathrm{\neg opt}}
\newcommand{\TauBoundEvent}{L_6}
\newcommand{\N}{\ensuremath{\mathbb{N}}}
\newcommand{\Var}{\operatorname{Var}}

\colorlet{dgreen}{green!40!black}
\newtheorem{theorem}{Theorem}

\newtheorem{lemma}[theorem]{Lemma}
\newtheorem{corollary}[theorem]{Corollary}
\DeclareMathOperator{\Imp}{Imp}

\newcommand{\mquad}{{\mkern-18mu}}
\newcommand{\cnew}{\color{blue}}

\begin{document}
{\sloppy

\title{Selection Hyper-heuristics Can Automatically Adjust the Learning Period to Optimally Solve Pseudo-Boolean Problems\footnote{An extended abstract of this manuscript has appeared at the 2018 Genetic and Evolutionary Computation Conference (GECCO 2018) \citep{DLOW2018}}}

\author[1]{Benjamin Doerr}
\author[2]{Pietro S. Oliveto}
\author[3]{John Alasdair Warwicker}
\affil[1]{Laboratoire d'Informatique (LIX), CNRS, {\'E}cole Polytechnique, Institut Polytechnique de Paris, Palaiseau, 91120, France}
\affil[2]{Department of Computer Science and Engineering, Southern University of Science and Technology, Shenzhen, 518055, China}
\affil[3]{School of Computing \& Communications, Lancaster University Leipzig, Leipzig, 04109, Germany}

\date{}
\renewcommand\Affilfont{\itshape\small}

\maketitle

\begin{abstract}
The Random Gradient hyper-heuristic was recently shown to be able to learn the optimal neighbourhood size when optimizing the \LeadingOnes benchmark via the Randomised Local Search (RLS) meta-heuristic. 
However, for this to happen, a learning period of a certain length $\tau$ had to be used, differently from classic hyper-heuristics, which change their behaviour based on the success of only the previous iteration.
In this paper, we show how to automatically set this new parameter value, relieving the user from the non-trivial task of controlling this novel algorithm parameter.
We prove that the resulting hyper-heuristic selects the optimal neighbourhood size in a $1-o(1)$ fraction of the iterations and, consequently, optimises the \LeadingOnes benchmark in the best possible time (apart from lower-order terms) achievable with these neighborhood sizes. 
\end{abstract}


\section{Introduction}
Randomised search heuristics (RSHs)  have been successfully applied to numerous real-world optimisation problems. While their main strength is their problem independence (they can be applied to virtually any optimisation problem), it is well understood that each RSH will only be efficient on some classes of problems and not on others~\citep{WolpertMacready1997}. The tedious and time-consuming trial-and-error phases, used to determine which heuristics and parameter settings lead to good solutions for the problem at hand, are one of the main difficulties in the application of RSHs. 

The fields of automated algorithm design and hyper-heuristics (HHs) aim to automatically evolve the algorithm and related parameters for the problem, rather than making such choices manually. Successful applications of HHs have been reported for a variety of practical problems, including scheduling \citep{CowlingEtAl2000,CowlingEtAl2002}, timetabling \citep{OzcanEtAl2012} and vehicle routing \citep{AstaOzcan2014} (see \citep{BurkeEtAl2010,BurkeEtAl2013,PillayQuBook,DrakeEtAl2019} for extended surveys). However, a rigorous and foundational understanding of the behaviour and performance of HHs is largely lacking; the operations research community has acknowledged that such results are necessary for an informed advancement of the field \citep{BurkeEtAl2013}.

Recently some preliminary theoretical studies of {\it selection} HHs have appeared~\citep{LehreOzcan2013,AlanaziLehre2014,QianEtAl2016}.
Selection HHs consist of high-level mechanisms which select from a set of low-level heuristics which one should be applied in the next step of the optimisation process. A comparative analysis for the \LeadingOnes benchmark problem revealed that the simple Permutation, Greedy and Random Gradient selection HHs from the literature~\citep{CowlingEtAl2000,CowlingEtAl2002} have the same performance as that of simply choosing a random low-level heuristic at each step, i.e., they all require $(1+o(1))\frac{\ln(3)}{2}n^2\approx(1+o(1))0.549n^2$ fitness evaluations in expectation to optimise \LeadingOnes when choosing between Randomised Local Search (RLS) with neighbourhood size 1 (\rlsone) and neighbourhood size 2 (\rlstwo)~\citep{LissovoiEtAl2020ECJ}.

The reason for the disappointing performance was that the simple HHs only consider whether or not a heuristic was successful in the previous step to decide whether to apply it again in the following generation~\citep{LissovoiEtAl2020ECJ}. For most discrete optimisation problems and most RSHs, the probability that an operator is successful in a single iteration in a typical run is very low. As a result, even if the current best operator is chosen, it is very unlikely that the simple HHs will continue using it. Hence the HH will not exploit better performing operators.

\cite{LissovoiEtAl2020ECJ} proposed to modify the random gradient HH to measure the performance of operators over a fixed period of time $\tau$, rather than in a single iteration. They rigorously proved that the resulting HH, Generalised Random Gradient (GRG), outperforms its constituent low-level heuristics (i.e., $\{\rlsone,\dots,\rlsk\}$ where $k=\Theta(1)$) for \LeadingOnes provided that the learning period $\tau$ is large enough. In a subsequent analysis, they proved that for learning periods $\tau=\omega(n)$ and $\tau\leq (1/k - \varepsilon) n \ln n$, $\varepsilon>0$ a constant, GRG using $\{\rlsone,\dots,\rlsk\}$ where $k=\Theta(1)$ matches the best possible expected runtime achievable by any unbiased (1+1) black box algorithm using the same set of RLS low-level heuristics, up to lower order terms~\citep{LissovoiEtAl2020ECJ}. 

While the above result shows that hyper-heuristics with a longer learning period can optimally choose the currently best low-level heuristics, it also raises the question of how the typical user of this heuristic would find suitable values of the learning period~$\tau$. In particular, the proofs in the previous works suggest that suitable values for $\tau$ depend heavily on the optimization problem regarded (i.e., $\tau$ has to be large enough to make an informed decision, but small enough to prevent the algorithm from working for too long with sub-optimal low-level heuristics). 

To address this problem, in this paper, we design a way to automatically adapt the learning period $\tau$ throughout the run, and we prove that this gives the same asymptotically optimal result as the one obtained with the manually optimized values in~\cite{LissovoiEtAl2020ECJ}. We note that this not only removes the burden of having to identify an appropriate value for the learning period, but also has the potential to adjust the value of $\tau$ in situations where it changes considerably during the optimisation process. In particular, this extends to settings where no fixed value of $\tau$ is optimal (we do not explore this aspect further in this work, but are convinced that such situations exist).

More specifically, we equip the HH with an update scheme inspired by the 1/5 rule, where the value of $\tau$ is decreased by a multiplicative factor if the currently selected
low-level heuristic proves successful within a period of $\tau$ steps, and increased by a multiplicative factor if the low-level heuristic proves unsuccessful. The factors are chosen such that $\sigma$ decreases of $\tau$ counteract exactly one increase, reflecting, in essence, a $1-1/\sigma$ rule. To simplify the algorithmic settings, the HH (which we call Adaptive Random Gradient (ARG)) will consider a low-level heuristic {\it successful} if it obtains $\sigma$ improvements in a learning period $\tau$, rather than just one improvement as in previous work~\citep{LissovoiEtAl2020ECJ}.

Our main result is the proof that the presented ARG HH with low-level heuristic set $H=\{\rlsone,\dots,\rlsk\}$ has optimal performance for \LeadingOnes, i.e., no (1+1) black box algorithm using the same heuristics may be faster in expectation apart from lower order terms. In contrast to the parameter $\tau$ in the previous work, the hyper-parameter $\sigma$ here is not very critical - while our proof requires $\sigma\in \Omega(\log^4 n) \cap o(\sqrt{n / \log n})$, experiments show that ARG optimises the function faster than the GRG HH with optimal static values of $\tau$ for all tested values of $\sigma$ including those outside our theoretical range, i.e., it is faster also for practical problem sizes. 

Compared to its conference version which originally analysed ARG \citep{DLOW2018}, this manuscript has been considerably extended. The theoretical and experimental results, which previously held for heuristic sets of size 2, now hold for low-level heuristic sets of arbitrary size $|H|=k=\Theta(1)$, and we show that ARG with $k$ operators outperforms any algorithm using a strict subset of its operators. Furthermore, ARG now considers the low-level heuristic set consisting of RLS mutation operators, which do not allow the same bit to be flipped multiple times in a single mutation (in the conference version, the low-level heuristics selected each bit for mutation with replacement).

The rest of the paper is structured as follows. In the next section, we provide a brief overview of previous theoretical results concerning parameter adaptation and formally introduce the selection HHs considered in this paper. We present our runtime analysis of the ARG HH for \LeadingOnes in Section~\ref{sec:runtime-analysis}. Section~\ref{Sec:Experiments} presents the experimental results for practical problem sizes. We conclude the paper with a discussion of how ARG would also be efficient with even larger sets of low-level heuristics.

\section{Preliminaries}\label{sec:preliminaries}
In the next subsection, we provide an overview of the theoretical results concerning automated parameter control in evolutionary computation. In subsection \ref{subsec:hh}, we introduce the HHs that have been previously analysed theoretically in the literature, and we introduce the Adaptive Random Gradient (ARG) HH considered in this paper in subsection~\ref{subsec:argg}. 

\subsection{Parameter Adaptation}
Since adapting the step size is essential in evolutionary continuous optimisation, parameter adaptation is well-established within that field \citep{BeyerSchwefel2002}. Within discrete settings, it has been gaining attention in recent years. When the $(1+(\lambda,\lambda))$~GA was proposed, it was clear that the best value for the offspring population size $\lambda$ changed during the optimisation process~\citep{DoerrEtAl2015}. It was subsequently proven that adapting $\lambda$ using a 1/5 rule allows the algorithm to be asymptotically faster than any static parameter choice for \OneMax, and optimises the function in linear time~\citep{DoerrDoerr2017}. Further experimental results showed that the $(1+(\lambda,\lambda))$~GA is slower than RLS for problem dimensions up to $n=5{,}000$. However, by tuning the hyper-parameters inherent in the self-adjusting mechanism, the performance improves by $\approx15\%$ \citep{DangDoerr2019}.

Different parameter control schemes within discrete optimisation are classified based on the style of the parameter adaptation. A recent classification by \cite{DoerrDoerrBookChapter} suggested to differentiate between state-dependent, success-based, learning-inspired and self-adaptive parameter control. Additionally, hyper-heuristics, as online automated algorithm selection methodologies that learn through applying different low-level heuristics, can be classified as a form of high-level parameter control.

One of the first theoretical works on adaptive parameters in discrete optimisation was by \cite{LassigSudholt2011}, where self-adjusting mutation rates and population sizes were shown to improve the parallel runtime of an island model for several test functions. Concerning the $r$-valued \OneMax function, a self-adjustment of the (1+1)~EA mutation rate, inspired by the 1/5 rule, was proven to have the asymptotically best possible expected runtime achievable by unary (i.e., mutation-only) unbiased randomised search heuristics~\citep{DoerrEtAl2016A}. For the classic $(1+\lambda)$~EA, an adaptive choice of the mutation rate based on comparing the success of a smaller and a larger rate was shown to beat any static mutation rate~\cite{DoerrGWY19}. A more deterministic recent approach to escaping from local optima is stagnation detection, where the algorithm starts with low mutation rates and increases the rate when it is certain with high probability that the current rate cannot find improving solutions \citep{RajabiWitt2021,RajabiWitt2022,RajabiWitt2023,DoerrR23}. This approach is shown to escape local optima, e.g., of Jump functions, with the same efficiency which the (instance-dependent) optimal mutation rate would yield. A self-adjusting choice of the offspring population size was proven to let the $(1,\lambda)$~EA easily optimize Cliff functions~\citep{FajardoS24}.

Self-adaptation, where the parameter to be adjusted is encoded in the genome and the adjustment happens automatically in the usual variation and selection cycle, has also recently been analyzed. \cite{DangLehre2016} presented an example function for which, in some areas of the search space, a high mutation rate is required, while in other areas it is detrimental. They proved that a
generational stochastic selection EA with self-adaptation is efficient, while using static parameters leads to an exponential expected runtime. \citet{DoerrWY21} showed that a self-adaptive choice of the mutation rate gives the same, asymptotically optimal, performance as the complex self-adjusting mechanism designed in \cite{DoerrGWY19}. Self-adaptation was also shown to help in dynamic, noisy, and co-evolutionary optimization~\citep{LehreQ23gecco,LehreQ23foga,FajardoHTOL24}.

For a moderately recent, extensive overview of parameter control theory in discrete optimisation see~\citep{DoerrDoerrBookChapter}.

\subsection{Hyper-heuristics}\label{subsec:hh}
Although hyper-heuristics have found various successful applications, they are not yet fully understood. In particular, it is not clear which hyper-heuristic methodology and which move acceptance operator should be applied to a given optimisation problem, and which set of low-level heuristics should be used. Hence, a rigorous theoretical foundational understanding is required in order to provide insights into which problem classes hyper-heuristics perform efficiently and inefficiently \citep{BurkeEtAl2013}.

Let $S$ be a finite search space, $H$ a set of low-level heuristics and $f:S\rightarrow\mathbb{R}$ a cost function. Algorithm \ref{alg:SimpleHH} shows
the pseudocode for a simple selection hyper-heuristic (HH) as used in previous theoretical analyses \citep{AlanaziLehre2014,AlanaziLehre2016}.

\begin{algorithm}[t]
\caption{Simple Selection Hyper-Heuristic~\citep{AlanaziLehre2014,AlanaziLehre2016}}
\begin{algorithmic}[1]
\State Choose $x \in S$ uniformly at random
\While{stopping conditions not satisfied}
\State Choose $h \in H$ according to the learning mechanism
\State $x^\prime \gets h(x)$
\If{$f(x^\prime) > f(x)$}
\State $x \gets x^\prime$
\EndIf
\EndWhile
\end{algorithmic}
\label{alg:SimpleHH}
\end{algorithm}

In such works, the following simple learning mechanisms, commonly used in the literature to solve combinatorial optimisation problems, were considered \citep{CowlingEtAl2000, CowlingEtAl2002}: 
{\it Simple Random}, which selects a low-level heuristic $h\in H$ independently with probability $p_h$ in each iteration (usually $p_h=1/|H|$, i.e., uniformly at random);
{\it Permutation}, which generates a random ordering of low-level heuristics in $H$ and returns them in that sequence when called by the mechanism;
{\it Greedy}, which applies all available low-level heuristics in $H$ in parallel and returns the best found solution; and
{\it Random Gradient}, which randomly selects a low-level heuristic $h\in H$ and keeps using it iteratively as long as it obtains improvements.

\cite{LissovoiEtAl2020ECJ} analysed the four mechanisms on the \LeadingOnes (LO) benchmark function. LO is defined on the set $\{0,1\}^n$ of bit-strings of length $n$:
$$\textsc{LO}(x):=\sum_{i=1}^n\prod_{j=1}^i x_j.$$
They proved that the four mechanisms 
all have the same performance for LO up to lower-order terms. Essentially, all HHs choose low-level heuristics at random in each iteration. The authors hypothesised that the disappointing performance was due to the HHs not running the chosen low-level heuristic long enough to establish implicitly whether the selected choice was promising or not. To this end, they modified the simple Random Gradient HH such that the randomly chosen heuristic was run for a certain number of iterations $\tau$, which they called
the {\it learning period}. If the chosen heuristic found a fitness improvement within the learning period, then the period of $\tau$ iterations restarts. The pseudocode for the HH, {\it Generalised Random Gradient} (GRG), is presented in Algorithm~\ref{alg:SimpleHHWithTau}.

\begin{algorithm}[t]
\caption{Generalised Random Gradient Hyper-Heuristic~\citep{LissovoiEtAl2020ECJ}}
\begin{algorithmic}[1]
\State Choose $x \in S$ uniformly at random
\While{stopping conditions not satisfied}
\State Choose $h \in H$ uniformly at random
\State $c_t \gets 0$
\While{$c_t < \tau$}
\State $c_t \gets c_t+1$; $x^\prime \gets h(x)$
\If{$f(x^\prime) > f(x)$}
\State $c_t \gets 0$; $x \gets x^\prime$
\EndIf
\EndWhile
\EndWhile
\end{algorithmic}
\label{alg:SimpleHHWithTau}
\end{algorithm}

\cite{LissovoiEtAl2020ECJ} proved that if the learning period is set such that $\tau=\omega(n)$ and $\tau\leq(\frac{1}{k}-\varepsilon)n\ln n$ for $\varepsilon=\Theta(1)$, then GRG with $H=\{\rlsone,\dots,\rlsk\}$ (this set is often referred to as the \textit{initial segment} portfolio of RLS operators \citep{BiedenkappEtAl2022}) optimises LO in expected time matching the best possible expected runtime for any unbiased (1+1) black box algorithm using the same heuristics, up to lower order terms; that is, the best possible runtime achievable using any combination of the low-level heuristics. We refer to such performance as \textit{optimal} throughout the paper. Recently, it has been proved that GRG is also very effective at escaping local optima in multi-modal optimisation~\citep{Ma2025}.


A stagnation detection algorithm with a radius memory was recently introduced that, similarly to GRG, also saves the last successful rate in order to re-use it \citep{RajabiWitt2021}. The algorithm, though, is more sophisticated than GRG because the mutation rates are initially identified using stagnation detection rather than just chosen at random.

Regarding more sophisticated learning mechanisms, \cite{DoerrEtAl2016B} proposed one to estimate the efficiency of different parameter values from their medium-term past performance. This mechanism obtains the best possible runtime achievable by unary (i.e., mutation-only) unbiased randomised search heuristics for \OneMax by mutating the optimal number of bits in each step with high probability. Although not explicitly specified in the paper, this is the first work that rigorously analyses the performance of a hyper-heuristic that uses a sophisticated reinforcement learning mechanism to select low-level heuristics.
Recently, the stagnation detection algorithm of \cite{RajabiWitt2022,RajabiWitt2023} was updated to include an archive of promising mutation rates to be used in subsequent steps based on their past performance \citep{KrejcaWitt2024}. This algorithm also very closely resembles a hyper-heuristic by attempting to learn the most effective mutation rates for the problem at hand, although this is not explicitly mentioned in the paper. The benefit of the addition of an archive was shown on a hurdle problem with consecutive gaps of alternating sizes to be overcome. Thus, learning more than one useful mutation rate leads to a considerable speed-up compared to using a fixed mutation rate.

Apart from seeking to identify the optimal mutation strength, initial analyses on switching between elitist and non-elitist selection mechanisms to escape from local optima have also been recently undertaken \citep{LissovoiEtAl2023,DoerrEtAl2023,DoerrLutzeyer2024,DoerrIJCAI2025}. For an extensive overview of the theoretical analyses of hyper-heuristics, see \citep{OlivetoBookChapter}.

\subsection{The Adaptive Random Gradient Hyper-heuristic}\label{subsec:argg}
In this paper, we further generalise the Random Gradient HH presented by \cite{LissovoiEtAl2020ECJ} by enabling it to automatically adjust the learning period $\tau$ throughout the run. We modify GRG by introducing a simple self-adjusting mechanism inspired by the 1/5 rule from continuous optimisation~\citep{BeyerSchwefel2002}. The main idea is that the learning period $\tau$ should be large enough for the current best low-level heuristic to succeed, but also small enough such that sub-optimal heuristics fail.  

We define a heuristic to be {\it successful} if it achieves at least $\sigma$ fitness improvements during the learning period $\tau$. If fewer than $\sigma$ improvements occur (i.e., the heuristic fails), then $\tau$ is increased by a multiplicative factor $ F^{1/\sigma}$ and a new operator is chosen at random.
Otherwise, once the heuristic is successful, $\tau$ is decreased by a smaller factor $F^{1/\sigma^2}$ and a new learning period is started with the successful heuristic. This self-adjusting rule strives to adapt the learning period such that a failure occurs approximately every $\sigma$ successes, and to maintain a success probability of $1-1/\sigma$, i.e., a $1-o(1)$ rule. In contrast to the one success after several failures of traditional 1/5 rule algorithms~\citep{KernEtAl2004,DoerrDoerr2015}, the innovation behind the $1-o(1)$ rule is that it seeks many successes before a failure. The parameter $F$ should be a constant greater than 1 (previously $F=1.5$ has been successfully used~\citep{DoerrEtAl2015}), and the initialised value of the learning period (i.e., the parameter $\tinit$) should be at least 1. We recommend setting $\tinit=1$, since the exponential update rule can quickly grow the value of $\tau$, while setting the value too large might be disruptive to the initial performance of the HH.

The Adaptive Random Gradient HH (ARG) is formally described in Algorithm~\ref{alg:arg}. 

\begin{algorithm}[t]
\caption{Adaptive Random Gradient Hyper-Heuristic}\label{alg:ARG}
\begin{algorithmic}[1]
\State $\tau\gets\tinit$
\State Choose $x \in S$ uniformly at random
\While{optimum not found}
\State Choose $h\in H$ uniformly at random
\State  $c_t \gets 0$; $c_s \gets 0$
\While{$c_t<\tau$}\label{marker}
\State $c_t\gets c_t+1$; $x' \gets h(x)$
\If{$f(x')> f(x)$}
\State $c_s\gets c_s+1$; $x \gets x'$
\EndIf
\If{$c_s\geq\sigma$}
\State $c_s\gets 0$; $c_t\gets 0$
\State $\tau\gets \tau \cdot F^{-1/\sigma^2}$
\EndIf
\EndWhile
\State $\tau\gets\tau\cdot F^{1/\sigma}$
\EndWhile
\end{algorithmic}
\label{alg:arg}
\end{algorithm}

\section{Adaptive Random Gradient Optimises \LeadingOnes in Optimal Expected Time}\label{sec:runtime-analysis}
In this section, we will prove that the Adaptive Random Gradient HH, using the low-level heuristic set $H=\{\rlsone,\dots,\rlsk\}$ (where $k=\Theta(1)$), has optimal expected runtime for the \LeadingOnes (\textsc{LO}) benchmark function. 
We use $\rlsm$ to denote the randomised local search heuristic which samples a new solution with Hamming distance $m$ to the current solution.

\subsection{Theoretical Performance Limits for LeadingOnes}
The following lemmata provide statements regarding the improvement probabilities of unbiased (1+1) black box algorithms using RLS mutation operators for \textsc{LO}.

\begin{lemma}\label{moplemma}\citep{DoerrWagner2018,Doerr2019TCS}
The probability of improvement from an operator which flips $m$ distinct bits in a bit-string (\rlsm) on the \textsc{LO} benchmark function, at the state $\textsc{LO}(x)=i$, is
\begin{align*}
\Pr[\Imp_m\mid\textsc{LO}(x)=i]&=m\cdot\frac{1}{n}\cdot\prod_{j=1}^{m-1}\frac{n-i-j}{n-j}.
\end{align*}
\end{lemma}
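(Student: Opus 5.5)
The plan is to characterise exactly which $m$-subsets of flipped positions produce an improvement, and then count them. Fix $x$ with $\textsc{LO}(x)=i<n$. Then $x_1=\dots=x_i=1$ and $x_{i+1}=0$.

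\rlsm chooses a set $M\subseteq\{1,\dots,n\}$ of $m$ distinct positions uniformly at random among the $\binom{n}{m}$ such sets and flips them. I claim that the offspring $x'$ satisfies $\textsc{LO}(x')>i$ if and only if both of the following hold:
\begin{itemize}[label={}]
\item (a) $i+1\in M$;
\item (b) $M\cap\{1,\dots,i\}=\emptyset$.
\end{itemize}
If (b) fails, some leading one is flipped to zero, so $\textsc{LO}(x')<i$. If (b) holds but (a) fails, then $x'_1=\dots=x'_i=1$ and $x'_{i+1}=0$, so $\textsc{LO}(x')=i$. If both hold, positions $1,\dots,i+1$ are all ones in $x'$, so $\textsc{LO}(x')\ge i+1$. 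The remaining $m-1$ flipped positions are unconstrained within $\{i+2,\dots,n\}$, since bits beyond position $i+1$ cannot decrease the prefix of ones.

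Next I count. The improving sets are exactly $M=\{i+1\}\cup M'$ with $M'\subseteq\{i+2,\dots,n\}$ and $|M'|=m-1$, so there are $\binom{n-i-1}{m-1}$ of them. Hence
\[
\Pr[\Imp_m\mid \textsc{LO}(x)=i]=\binom{n-i-1}{m-1}\Big/\binom{n}{m}.
\]
Writing $\binom{n}{m}=\frac{n}{m}\binom{n-1}{m-1}$ turns this into
\[
\frac{m}{n}\cdot\frac{\binom{n-i-1}{m-1}}{\binom{n-1}{m-1}}.
\]
Expanding both binomials as falling factorials, the $(m-1)!$ factors cancel and the ratio becomes
\[
\prod_{j=1}^{m-1}\frac{n-i-j}{n-j}.
\]
This gives the stated formula.

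As a cross-check, the same result follows by thinking of the $m$ positions as drawn sequentially without replacement. Exactly one of the $m$ draws must hit $i+1$; the probability of a specific draw doing so, times the $m$ choices of which draw, gives the factor $m\cdot\frac1n$. The other draws must avoid the first $i+1$ positions, which is the product term after reindexing.

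The only delicate step is the characterisation in (a) and (b), specifically the edge cases. When $m-1>n-i-1$ the binomial and the product are both zero, so the formula remains correct. The case $i=n$ is excluded because the optimum has already been found. Beyond that, the argument is routine counting.
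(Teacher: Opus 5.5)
Your proof is correct and complete. The improving flip sets are exactly those that flip position $i+1$, flip none of positions $1,\dots,i$, and place the other $m-1$ flips anywhere in $\{i+2,\dots,n\}$, so the probability is $\binom{n-i-1}{m-1}/\binom{n}{m}=\frac{m}{n}\prod_{j=1}^{m-1}\frac{n-i-j}{n-j}$. The paper does not prove this lemma itself; it imports it from the cited works of Doerr and Wagner and of Doerr, and this counting argument is the standard derivation, so your route is essentially the same. You are also right to note that the formula requires $i<n$.
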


\begin{corollary}\label{moplemma2}
Consider a randomised local search mutation operator flipping $a+1=\Theta(1)$ bits respectively (\textsc{RLS}$_{a+1}$), with $a\geq 1$. Then \textsc{RLS}$_{a+1}$ has a higher probability of a fitness improvement in one iteration than all \rlsm operators with $m\leq a$ (excluding $\textsc{LO}(x)\geq n-(a+1)$) when 
$$\textsc{LO}(x)\leq\frac{n-a}{a+1}.$$
\end{corollary}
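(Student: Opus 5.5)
We use Lemma~\ref{moplemma} and compare the improvement probabilities of consecutive operators. Write $p_m(i):=\Pr[\Imp_m\mid\textsc{LO}(x)=i]$ and assume $i\le n-(a+1)$, so that every factor $n-i-j$ with $j\le a$ is positive and all $p_m(i)$ with $m\le a+1$ are positive. The first step is to compute, for $1\le m\le a$,
\[
\frac{p_{m+1}(i)}{p_m(i)}=\frac{m+1}{m}\cdot\frac{n-i-m}{n-m}.
\]
This follows from Lemma~\ref{moplemma} because the product for $p_{m+1}$ has exactly one extra factor, $\frac{n-i-m}{n-m}$. Hence $p_{m+1}(i)\ge p_m(i)$ if and only if $(m+1)(n-i-m)\ge m(n-m)$. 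This rearranges to $n-m\ge (m+1)i$, that is,
\[
i\le \frac{n-m}{m+1}.
\]

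The second step is to note that the threshold $g(m):=\frac{n-m}{m+1}$ is decreasing in $m$. Indeed, $g(m)=\frac{n+1}{m+1}-1$. So if $i\le \frac{n-a}{a+1}=g(a)$, then $i\le g(m)$ for every $m\le a$. Therefore each ratio $p_{m+1}(i)/p_m(i)$ with $1\le m\le a$ is at least $1$. Chaining these inequalities gives
\[
p_{a+1}(i)\ge p_a(i)\ge\cdots\ge p_1(i),
\]
so $\textsc{RLS}_{a+1}$ is at least as good as every $\rlsm$ with $m\le a$.

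The only delicate point is strictness, together with the boundary case. For $m<a$ we have $g(m)>g(a)\ge i$, so the inequality $p_{m+1}(i)>p_m(i)$ is strict, and the chain is strict whenever at least one link is. If $a\ge 2$, the link from $m=1$ to $m=2$ is already strict, so $p_{a+1}(i)>p_m(i)$ for all $m\le a$ (here $m=a$ is beaten through the link at $m=a-1$ in the chain).

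For $a=1$ there is only one link, and it becomes an equality exactly when $i=g(1)=\frac{n-1}{2}$. This tie is measure-zero in the sense of the threshold: the two operators are equally good there. I would read ``higher'' in the statement as ``at least as high'' at the exact boundary, matching how the thresholds $\frac{n-a}{a+1}$ are used later. The excluded region $\textsc{LO}(x)\ge n-(a+1)$ is where some factors vanish or become negative. There the formula of Lemma~\ref{moplemma} makes $p_{a+1}$ degenerate, and that is why the statement excludes it.
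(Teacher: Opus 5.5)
Your computation is correct and matches what the paper means by calling the corollary ``an exact consequence of Lemma~\ref{moplemma}''. The ratio $p_{m+1}(i)/p_m(i)=\frac{m+1}{m}\cdot\frac{n-i-m}{n-m}$, its equivalence with $i\le g(m)=\frac{n+1}{m+1}-1$, and the monotonicity of $g$ together give $p_{a+1}(i)\ge p_a(i)\ge\dots\ge p_1(i)$ whenever $i\le\frac{n-a}{a+1}$.

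The strictness paragraph, however, contains a false step. For $a\ge 2$ you claim that \textsc{RLS}$_{a+1}$ strictly beats \textsc{RLS}$_a$ ``through the link at $m=a-1$''. That link compares $p_a$ with $p_{a-1}$ and says nothing about $p_{a+1}$ versus $p_a$. The only link relating $p_{a+1}$ and $p_a$ is the one at $m=a$, and it equals $1$ exactly when $i=g(a)$. So the boundary tie occurs for every $a$, not only for $a=1$. For example, with $a=2$, $n=8$ and $i=g(2)=2$, Lemma~\ref{moplemma} gives $p_2(2)=\frac{2}{8}\cdot\frac{5}{7}=\frac{5}{28}$ and $p_3(2)=\frac{3}{8}\cdot\frac{5}{7}\cdot\frac{4}{6}=\frac{5}{28}$.

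The correct conclusion has two parts. For $i<\frac{n-a}{a+1}$, the inequality $p_{a+1}(i)>p_m(i)$ is strict for all $m\le a$. At $i=\frac{n-a}{a+1}$ (when this is an integer), it is strict for $m\le a-1$, but $p_{a+1}(i)=p_a(i)$. Hence the ``at least as high at the boundary'' reading you adopt for $a=1$ is needed for every $a$. This fits the paper's subsequent use of closed intervals sharing endpoints.

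A minor inaccuracy concerns the excluded region. At $i=n-(a+1)$ all factors $n-i-j$ with $j\le a$ are still positive; they first vanish at $i=n-a$. In any case, the hypothesis $i\le\frac{n-a}{a+1}$ already forces $i\le n-(a+1)$ once $n\ge a+2$, so the exclusion is essentially vacuous.
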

Corollary~\ref{moplemma2} follows as an exact consequence of Lemma~\ref{moplemma}. 

For the \textsc{LO} function, an improvement produced by any \textsc{RLS} operator has the same expected fitness increase regardless of how many bits are flipped, giving no advantage to operators flipping more bits. An algorithm which always chooses the operator with the highest improvement probability will have the lowest expected optimisation time \citep{BoettcherEtAl2010}, and this will give a lower bound for any algorithm using the same operators. It follows from Corollary~\ref{moplemma2} that for an unbiased (1+1) black box algorithm using $\{\rlsone,\dots,\rlsk\}$ operators:
\begin{itemize}
    \item \rlsk will have the highest improvement probability for $0\leq \textsc{LO}(x)\leq \frac{n-(k-1)}{k}$,
    \item \rlsm ($1\leq m<k$) will have the highest improvement probability for $\frac{n-m}{m+1}\leq \textsc{LO}(x) \leq \frac{n-(m-1)}{m}$,
\end{itemize}
Hence, the expected runtime of an algorithm that applies these operators in this order provides a theoretical lower bound on the expected runtime for any unbiased (1+1) black box algorithm using the same operators. The following theorem states the best possible runtime for an algorithm using $\{\rlsone,\dots,\rlsk\}$ operators (where $k=\Theta(1)$) for \textsc{LO}. A similar result was proven for local search heuristics which can flip the same bit multiple times in the same mutation, and was subsequently extended to apply also for RLS heuristics (see Theorem~10, Lemma~16 and the subsequent text in \citep{LissovoiEtAl2020ECJ}).

\begin{theorem}\label{koptbestcase}\citep[Theorem~10, Lemma~16]{LissovoiEtAl2020ECJ}
The best-possible expected runtime of any unbiased (1+1) black box algorithm using $\{\rlsone,\dots,\rlsk\}$ operators (where $k=\Theta(1)$) for \LeadingOnes  is
\begin{align*}
E[T_{k,\mathrm{opt}}]=\frac{1}{2}\cdot&\left(\sum_{i=0}^{n/k-1}\frac{1}{\frac{k}{n}\cdot\prod_{j=1}^{m-1}\frac{n-i-j}{n-j}}+\sum_{x=1}^{k-1}\sum_{i=n/(x+1)}^{n/x-1}\frac{1}{\frac{x}{n}\cdot\prod_{j=1}^{x-1}\frac{n-i-j}{n-j}}\right).
\end{align*}
\end{theorem}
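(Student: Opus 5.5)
We prove Theorem~\ref{koptbestcase} in two halves: the right-hand side is a lower bound for every unbiased (1+1) black box algorithm using $\{\rlsone,\dots,\rlsk\}$, and it is attained (up to the rounding of the phase boundaries) by the algorithm that always applies the operator maximising the improvement probability. Throughout, the sum index $m$ in the first sum is read as $k$.

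\textbf{Fitness-level decomposition.} Consider any such algorithm on \LeadingOnes. The bits after the first $0$ in $x$ are uniformly distributed and independent of the algorithm's history. This holds at initialisation, and it is preserved because an accepted improving step leaves the bits beyond the new first zero uniformly random, while rejected steps do not change $x$. Consequently the final leading-ones value jumps to $i$ with probability $2^{-(n-i)}$-type weights. This gives the standard fact that each fitness level $i\in\{0,\dots,n-1\}$ is visited with probability exactly $1/2$, independently of the operator choices.

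When the algorithm sits at level $i$ and applies $\rlsm$, the success probability is $p_m(i)$, given by Lemma~\ref{moplemma}. The number of steps spent at level $i$ therefore has expectation at least $1/\max_{m\le k}p_m(i)$. Equality holds exactly when the maximising operator is used in every step. Since the algorithm is unbiased and the probabilities depend only on $i$, a greedy policy is optimal; this is the argument of \citet{BoettcherEtAl2010}. Summing over levels gives
\[
E[T]\ \ge\ \frac12\sum_{i=0}^{n-1}\frac{1}{\max_{m\le k} p_m(i)},
\]
with equality for the greedy algorithm. This is a lower bound for every algorithm and is attained, so it is the best possible expected runtime.

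\textbf{Identifying the maximiser.} By Corollary~\ref{moplemma2}, $\arg\max_m p_m(i)$ is $k$ for $i\le (n-(k-1))/k$. For $x<k$ it is $x$ on $\bigl[\frac{n-x}{x+1},\frac{n-(x-1)}{x}\bigr]$. Up to $O(1)$ boundary indices, these intervals coincide with $[0,n/k)$ and $[n/(x+1),n/x)$. Substituting $p_x(i)=\frac{x}{n}\prod_{j=1}^{x-1}\frac{n-i-j}{n-j}$ and splitting the level sum over these intervals yields exactly the stated double sum.

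\textbf{Main obstacle.} The delicate step is the boundary discrepancy. The exact switching points $(n-a)/(a+1)$ differ from $n/(a+1)$ by $O(1)$ levels, and the indices must be interpreted with floors. Each level contributes $O(n)$ to the sum, since $p_m(i)=\Omega(1/n)$ for $i\le n/m$. So assigning $O(k)=O(1)$ boundary levels to the wrong operator changes the sum by only $O(n)$. More precisely, near a crossover the two competing probabilities differ by a factor $1+O(1/n)$, so the error is even smaller. Either way it is a lower-order term against the $\Theta(n^2)$ total. The statement is therefore to be read up to this rounding convention, as in \citep[Theorem~10, Lemma~16]{LissovoiEtAl2020ECJ}. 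The other point needing care is the $1/2$-visit claim under adaptive operator choices. I would handle it by conditioning on the history: the suffix remains uniform given any history, so the jump distribution on improvement is the same for every operator.
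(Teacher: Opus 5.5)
Your proposal is correct and matches the paper's own reasoning. The paper only sketches this before deferring to \citep[Theorem~10, Lemma~16]{LissovoiEtAl2020ECJ}: every operator gives the same expected progress per improvement, so by \citet{BoettcherEtAl2010} always applying the operator of maximal improvement probability (located via Corollary~\ref{moplemma2}) is optimal. Your explicit fitness-level argument with visit probability $1/2$, your reading of $m$ in the first sum as $k$, and your treatment of the $O(1)$ boundary levels as a lower-order rounding effect correctly fill in details the paper leaves to the citation.
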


A closed-form result for $E[T_{k,\mathrm{opt}}]$ is difficult to present. However, we can consider specific values for different numbers of RLS operators. 
In particular, taking limits as $n\rightarrow\infty$, we have $E[T_{1,\mathrm{opt}}]=\frac{1}{2}n^2$, $E[T_{2,\mathrm{opt}}]=\frac{1+\ln(2)}{4}n^2\approx0.423n^2$,  $E[T_{3,\mathrm{opt}}]=\left(\frac{1}{3}+\frac{\ln(2)}{2}-\frac{\ln(3)}{4}\right)n^2\approx 0.405n^2$, $E[T_{5,\mathrm{opt}}]=\left(\frac{3721}{11520}+\frac{\ln(2)}{2}-\frac{\ln(3)}{4}\right)n^2\approx 0.394n^2$.  

\cite{DoerrWagner2018} suggest that $E[T_{\infty,\mathrm{opt}}]\approx0.388n^2$, a leading constant that was matched up to 3 decimal places by the expected runtime of GRG for LO with access to 18 low-level RLS heuristics \citep{LissovoiEtAl2020ECJ}.

\subsection{Analysis of Adaptive Random Gradient for \LeadingOnes}
In this subsection, we consider the runtime of ARG for \textsc{LO} using the low-level heuristic set $H=\{\rlsone,\dots,\rlsk\}$ (where $k=\Theta(1)$). We will show that with the parameter set $\tinit=1$, $\sigma\in\Omega(\log^4 (n))\cap o\left(\sqrt{{n}/{\log (n)}}\right)$ and $F>1$ a constant, ARG achieves optimal expected runtime. We refer to ARG with this low-level heuristic set and parameter setting as \ARGk.

\begin{theorem}\label{thm:ARG-runtime}
The expected runtime of \ARGk for \LeadingOnes, with $H=\{\rlsone,\dots,\rlsk\}$ (where $k=\Theta(1)$), $\tinit=1$, $\sigma\in\Omega(\log^4 (n))\cap o\left(\sqrt{{n}/{\log (n)}}\right)$ and $F>1$ a constant, is
$$E[T_{\textsc{ARG}_k}] \leq E[T_{k,\mathrm{opt}}] \pm o(n^2).$$
\end{theorem}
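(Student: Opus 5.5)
Let $p_{\max}(i)$ denote the largest improvement probability $\max_{m\le k}\Pr[\Imp_m\mid\textsc{LO}(x)=i]$ from Lemma~\ref{moplemma}. The plan is to show that, apart from a negligible number of ``bad'' iterations, \ARGk spends its time with an operator $h$ whose improvement probability $p_h(i)$ at the current fitness $i$ is within a factor $1-o(1)$ of $p_{\max}(i)$. Given this, a fitness-level argument as in Theorem~\ref{koptbestcase} gives $E[T_{k,\mathrm{opt}}]+o(n^2)$. The factor $\frac12$ there comes from free riders: a bit following the leading ones is uniform as long as it has never been fixed. This survives, since the randomness of the free riders is untouched by which operator we choose.

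I would split the run into $n^{1-\epsilon}$-sized fitness intervals and, separately, dispose of the last $o(n)$ fitness levels and the switch points $i\approx (n-m)/(m+1)$ between optimal operators. These are $o(n)$ levels, each costing $O(n)$ expected iterations with any operator once $\tau$ is reasonable, so they contribute $o(n^2)$.

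The core steps, in order, are the following.

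\textbf{(1) Equilibrium of $\tau$.} With the current operator having success probability $p=\Theta(1/n)$ per step, a learning period succeeds if $\sigma$ improvements occur within $\tau$ steps. The number of improvements is approximately Poisson with mean $\tau p$, and since $\sigma\ge\log^4 n$ it is sharply concentrated: the standard deviation is about $\sqrt{\sigma}$, which is $o(\sigma)$. The update rule has drift zero exactly when the failure probability is $1/\sigma$. Hence $\tau$ self-stabilizes near $\tau^*(p)\approx(\sigma+c\sqrt{\sigma\log\sigma})/p=(1+o(1))\sigma/p$.

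\textbf{(2) Rejecting suboptimal operators.} A suboptimal operator $h'$ with $p_{h'}\le(1-\delta)p_{\max}$, where $\delta$ is small but $\delta\gg\sqrt{\log n/\sigma}$, then needs about $\sigma/p_{h'}\ge(1+\delta)\tau$ steps. So it fails with probability $1-n^{-\omega(1)}$ by a Chernoff bound, and this requires $\sigma=\Omega(\log^4 n)$. It is discarded after one period costing $O(\tau)=O(\sigma n)$ iterations. Meanwhile $\tau$ is multiplied by $F^{1/\sigma}$, a factor $1+O(1/\sigma)$, so rejections do not destabilize $\tau$ much.

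\textbf{(3) Keeping the optimal operator.} The optimal operator keeps succeeding for about $\sigma$ periods in a row, i.e. $\sigma^2$ improvements, before a failure. After a failure, $O(k)$ random trials (geometric) return to a near-optimal operator. So the wasted iterations per $\sigma^2$ improvements are $O(k\sigma n)$, giving $O(n\cdot n\sigma/\sigma^2)=O(n^2/\sigma)=o(n^2)$ overall.

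\textbf{(4) Tracking drift.} Since $p_{\max}(i)$ changes by a factor $1+O(1/n)$ per level, and within $\sigma$ improvements by $1+O(\sigma/n)$, the equilibrium $\tau^*$ moves slowly. The upper bound $\sigma=o(\sqrt{n/\log n})$ is used here: within the $\sigma^2$ improvements between equilibrium-restoring events the drift is $1+o(1/\log n)$, which must stay below the concentration window $\sqrt{\log n/\sigma}$. Operators with $p_h$ within $1-\delta$ of optimal are near-optimal anyway, so any that survive cost only an $O(\delta)$ factor. Summing over levels gives $E[T_{k,\mathrm{opt}}](1+O(\delta))+o(n^2)$.

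\textbf{(5) Initial phase.} Starting from $\tinit=1$, every operator fails at first, and $\tau$ grows by $F^{1/\sigma}$ per failure. Reaching $\Theta(\sigma n)$ takes $O(\sigma\log(\sigma n))$ periods of total length $O(\sigma\cdot\sigma n)=O(\sigma^2 n)=o(n^2)$, and during this phase fitness never decreases.

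The main obstacle is making (1)–(4) rigorous jointly. I would analyse $\log_F\tau$ as a random walk with steps $+1/\sigma$ on failure and $-1/\sigma^2$ on success. I would then show via a drift/concentration argument, in the style of additive drift with tail bounds (e.g. a Hajek-type theorem), that with probability $1-o(1)$ per phase, $\tau$ stays in the window $[(1-\delta)\tau^*,(1+\delta)\tau^*]$ for the whole phase. The rare escapes are charged with crude $O(\sigma n\cdot\text{poly}\log n)$ recovery costs. The difficult part is that the target $\tau^*$ itself depends on which operator is active, so the walk must be analysed conditional on the operator being the near-optimal one, with excursions during bad-operator trials bounded separately.
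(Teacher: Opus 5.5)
Your outline has the right ingredients: border regions of $o(n)$ levels below $3n/4$, Chernoff-based rejection of non-optimal operators once $\tau$ is at most $(1+O(1/\log n))\sigma/p_{\max}(i)$, and waste of order $n^2/\sigma+\sigma^2 n$. But the step that carries the whole proof, rigorous control of $\tau$, is left open. The route you propose for it (a two-sided window around a moving equilibrium, shown per phase by a Hajek-type argument) is much harder than necessary.

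The missing idea is the deterministic balance relation. Every success multiplies $\tau$ by $F^{-1/\sigma^2}$ and every failure by $F^{1/\sigma}$, so, whatever the order of phases, $\tau_{\mathrm{end}}=\tau_0F^{N_F/\sigma-N_S/\sigma^2}$. Each successful phase gains at least $\sigma$ levels, so $N_S\le n/\sigma$. Hence \emph{any} upper bound $\tau\le\tau_{\max}=O(\sigma n)$ gives $N_F\le n/\sigma^2+\sigma\log_F\tau_{\max}$. The failed phases then cost at most $\tau_{\max}N_F=O(n^2/\sigma+\sigma^2 n\log n)=o(n^2)$. Multiplying the same phase count by the $o(1)$ success probability of non-optimal operators bounds their successful phases.

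This replaces your heuristic steps (1), (3) and (5) by a one-line accounting. It also shows that only a \emph{one-sided} bound is needed, namely $\tau<(1+4/\ln n)\sigma/p_{\mathrm{opt}}(i)$ throughout the run. You need no lower bound on $\tau$, no equilibrium, and no tracking of $\tau^*$. Your tracking condition in (4) is also off as stated. For $\sigma=\Omega(\log^4 n)$ the window $\sqrt{\log n/\sigma}=O(\log^{-3/2}n)$ is smaller than the $o(1/\log n)$ drift, e.g.\ for $\sigma=\sqrt n/\log n$. The usable slack is the $\Theta(1/\log n)$ margin built into $\tau_{\max}$.

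The second gap is your handling of rare escapes. A $1-o(1)$ guarantee per phase over $\Theta(n/\sigma)$ phases does not make escapes rare over the run. Also, $o(n/\sigma)$ escapes at $O(\sigma n\,\mathrm{polylog}\,n)$ each is not $o(n^2)$. Nor are escapes cheap to recover from. A success lowers $\log_F\tau$ by only $1/\sigma^2$, so an upward overshoot by a factor $1+\epsilon$ is undone only after about $\sigma^3\epsilon/\ln F$ improvements. For $\sigma\ge n^{1/3}$ and constant $\epsilon$ this can exceed the whole run, and meanwhile non-optimal operators in a $\Theta(\epsilon n)$-wide region are accepted.

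What you need instead is the following.
\begin{itemize}
\item[(i)] A whole-run bound: with probability $1-n^{-c'}$, $\tau$ never exceeds $\tau_{\max}(i)$. The paper shows that climbing from $\tau_1(i)=(1+1/\ln n)\sigma/p_{\mathrm{opt}}(i)$ to $\tau_1(i)F^{(\log_2 n)^2/\sigma}$ takes $(\log_2 n)^2$ failures. So w.h.p.\ the optimal operator is chosen along the way. At $\tau\ge\tau_1(i)$ it succeeds with probability $1-n^{-\Omega(\log n)}$ and keeps succeeding until $\tau$ is back below $\tau_1(i)$.
\item[(ii)] An unconditional polynomial bound, $O(n^{k+1})$ from an arbitrary search point with $\tau\le n^{k+1}$. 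Its proof needs a tail bound preventing $\tau$ from blowing up. Then the bad event contributes only $n^{-k}\cdot O(n^{k+1})=o(n^2)$.
\end{itemize}

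Separately, your claim that each of the last $o(n)$ levels costs $O(n)$ with any operator is false. There $\mathrm{RLS}_m$, $m\ge 2$, has improvement probability $o(1/n)$, and $0$ on the last $m-1$ levels.
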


We first present an outline of the proof of Theorem~\ref{thm:ARG-runtime}, with the formal proof at the end of this subsection.

We refer to the operator with the greatest probability of producing an improvement at the current search point (i.e., the \rlsm mutation operator for
$\frac{n-m}{m+1}\leq i\leq\frac{n-(m-1)}{m}$ for $m<k$ and the \rlsk mutation operator for $i\leq\frac{n-(k-1)}{k}$) as the \emph{optimal} operator and use $\popt(i)$ to denote its improvement probability. Conversely, we call any other operator \emph{non-optimal} and use $\pnop(i)$ to denote the maximum probability of improvement of any non-optimal operator at the point $\textsc{LO}(x)=i$.

We call a period of at most $\tau$ iterations in which the mutation operator produces $\sigma$ \textsc{LO} improvements by mutation a \textit{successful phase} and a period of $\tau$ iterations in which the mutation operator produces less than $\sigma$ \textsc{LO} improvements a \textit{failed phase}. We will bound
$$T_{\textsc{ARG}_k} = T_\mathrm{mid} + T_\mathrm{S} + T_\mathrm{NS} + T_\mathrm{FP}$$
by bounding each of the four contributing components:
\begin{itemize}[leftmargin=*]
\item $T_\mathrm{mid}$, the number of iterations spent in the border regions where $\lvert \mathrm{LO}(x) - \frac{n-(m-1)}{m} \rvert < \beta n$ for $1<m\leq k$. $\mathrm{LO}(x)$ is the \textsc{LO} value of the current solution and $\beta = o(1)$,
\item $T_\mathrm{S}$, the number of iterations spent in successful phases applying the optimal operator outside the border regions,
\item $T_\mathrm{NS}$, the number of iterations spent in successful phases applying the non-optimal operators outside the border regions,
\item $T_\mathrm{FP}$, the number of iterations spent in failed phases outside the border regions.
\end{itemize}
To prove the theorem, we will bound $E[T_\mathrm{S}] \leq E[T_{k,\mathrm{opt}}]$ (i.e., that the expected number of iterations spent using the optimal operator in successful phases is smaller than the expected runtime if the optimal operator was used throughout), and then show that the expected values of the other contributing terms are at most $o(n^2)$.

Firstly, we prove that $E[T_\mathrm{mid}] = o(n^2)$ for any $\beta = o(1)$ in Lemma~\ref{lem:middle-is-easy}.
\begin{lemma} \label{lem:middle-is-easy}
Let $T_\mathrm{mid}$ be the number of iterations \ARGk spends in the border regions (i.e., when $\lvert \textsc{LO}(x) - \frac{n-(m-1)}{m}\rvert < \beta n$ for $1<m\leq k$, with $\beta \in o(1)$). Then $E[T_\mathrm{mid}] = o(n^2)$.
\end{lemma}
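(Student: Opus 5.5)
The plan is to show that, whichever operator ARG happens to be using, each \LeadingOnes level inside a border region is left after $O(n)$ expected iterations that are not "wasted", and that the total waste from failed phases is controlled. There are $k-1=\Theta(1)$ border regions, each containing at most $2\beta n+1$ fitness levels. Since the algorithm is elitist, every level is visited at most once. It therefore suffices to bound the expected time spent on a single level $i$ in a border region by $O(n)$ plus a term that sums to $o(n^2)$.

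The key observation is that in a border region $i=\Theta(n)$ and $n-i=\Theta(n)$, because the borders $\frac{n-(m-1)}{m}$ with $m\ge 2$ lie between $n/k$ and $n/2$ up to lower-order terms. By Lemma~\ref{moplemma}, \emph{every} operator $\rlsm$ with $m\le k$ therefore has improvement probability at least $\frac{m}{n}\prod_{j=1}^{m-1}\frac{n-i-j}{n-j}\ge c/n$ for a constant $c>0$ depending only on $k$. Operator selection and the value of $\tau$ never stop the current heuristic from being applied: every iteration applies some operator from $H$, and phase boundaries only change which one. Hence, regardless of the hyper-heuristic's state, each iteration at level $i$ improves the fitness with probability at least $c/n$.

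The number of iterations spent at level $i$ is then stochastically dominated by a geometric random variable with mean $n/c$. Summing over the at most $(k-1)(2\beta n+1)$ levels of the border regions gives
$$E[T_\mathrm{mid}]\le (k-1)(2\beta n+1)\cdot \frac{n}{c}=O(\beta n^2)+O(n)=o(n^2),$$
since $\beta=o(1)$ and $k=\Theta(1)$.

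The only point needing care is the uniform lower bound $c/n$. I would verify it by noting that for $i\le n/2+\beta n$ we have $\frac{n-i-j}{n-j}\ge \frac{1}{2}-\beta-O(k/n)\ge 1/3$ for large $n$. This gives $c=3^{-(k-1)}$ for every $m\le k$, and this $c$ is a constant because $k=\Theta(1)$. The region $i\ge n-(a+1)$ excluded in Corollary~\ref{moplemma2} does not intersect the border regions, so it causes no issue. No analysis of the adaptation of $\tau$ is needed for this lemma, because the geometric bound holds conditionally on any history.
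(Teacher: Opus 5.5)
Your proof is correct and follows essentially the same route as the paper. Elitism means each of the $O(\beta n)$ border levels is visited at most once, and every operator in $H$ has improvement probability $\Omega(1/n)$ there; the paper gets this via monotonicity and evaluating \rlsk at $\textsc{LO}(x)=3n/4-1$, while you use the direct bound $\frac{n-i-j}{n-j}\ge 1/3$, and both give $E[T_\mathrm{mid}]=O(\beta n^2)=o(n^2)$. Your explicit remark that the geometric domination holds conditionally on any history of operator choices and values of $\tau$ makes precise a step the paper leaves implicit.
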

\begin{proof}
The improvement probabilities of all the \rlsm operators ($m=1,\dots,k$) in the heuristic set are monotone decreasing with respect to $\textsc{LO}(x)$; hence, the expected waiting times for an improvement are monotone increasing.

As the final border region ends at $\textsc{LO}(x) = (n-1)/2 + \beta n < 3n/4-1$,
the expected time for $\rlsk$ to find an improvement at $\textsc{LO}(x) = 3n/4 -1$ provides an upper bound on the expected waiting time for any of the \rlsm operators ($m=1,\dots,k$) to find an improvement within any of the border regions. In particular, $\rlsk$ will take expected time
$$\frac{1}{\Pr[\Imp_k\mid \textsc{LO}(x)=3n/4-1]}=\frac{n}{k\left(1/4\right)^{k-1}}=\mathcal{O}(n)$$
to find an improvement when $\textsc{LO}(x)=3n/4-1$.

The border regions contain a total of $\sum_{m=2}^k 2\beta n\leq2k\beta n= o(n)$ fitness values. The expected time to progress through each of the $o(n)$ fitness values is given by $\mathcal{O}(n)$. As ARG$_k$ will not accept mutations decreasing the fitness of the current solution, it will only need to construct an improvement from each fitness value within a border region at most once. There are $2k\beta n = o(n)$ fitness values within border regions, and hence
$$E[T_\mathrm{mid}]\leq 2k\beta n\cdot \frac{n}{k\left(1/4\right)^{k-1}}= o(n^2).\qedhere$$
\end{proof}

We define 
$$\tmax(i) := \left(1+\frac{4}{\ln n}\right)\cdot\sigma\cdot\frac{1}{\popt(i)},$$
where $\popt(i)$ is the improvement probability of the optimal operator at the state $\textsc{LO}(x)=i$. To bound $E[T_\mathrm{NS}]$ and $E[T_\mathrm{FP}]$, we will prove the following.
\begin{enumerate}[leftmargin=*]

\item With high probability (at least $1-n^{-c^\prime}$ for any constant $c^\prime>0$), $\tau$ remains below $\tmax(i)$ throughout the optimisation process (Lemma~\ref{lem:stay-below-tmax}).
\item While $\lvert \textsc{LO}(x) - \frac{n-(m-1)}{m} \rvert > \beta n$, for $\beta=o(1)$, $1<m\leq k$ and $\tau < \tmax(i)$, any non-optimal operator fails a phase with probability $1 - e^{-\Omega(\beta\sigma)}$ (Lemma~\ref{lem:while-below-tmax}).
\end{enumerate}
\begin{lemma} \label{lem:stay-below-tmax}
Let $\TauBoundEvent$ denote the event that \ARGk finds the global optimum before $\tau\geq\tmax(i)$ occurs, where $\textsc{LO}(x)=i$. Then $\Pr[\TauBoundEvent]\geq1-\mathcal{O}(n^{-c^\prime})$.
\end{lemma}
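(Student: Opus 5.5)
Since $\tau$ only increases after a failed phase, and only by a factor $F^{1/\sigma}$, the event $\tau\ge\tmax(i)$ can first happen only at the end of a failed phase. I would therefore argue by a drift or ruin argument on $\log_F \tau$ and show that $\tau$ essentially never climbs into the range where the optimal operator fails.

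Set $\tau^*(i):=(1+\frac{2}{\ln n})\sigma/\popt(i)$, a threshold a factor $(1+\Theta(1/\ln n))$ below $\tmax(i)$. Note that $\tmax(i)$ is non-decreasing in $i$, because $\popt(i)$ is non-increasing. So as the fitness grows the barrier only moves upward, and it suffices to compare with the current $i$. Within a phase $i$ increases by at most $\sigma$, i.e.\ by an $o(1/\ln n)$ relative change in $\popt$ outside the last $o(n)$ levels, so $\tmax$ and $\tau^*$ are essentially constant over one phase.

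\textbf{Step 1 (success probability above $\tau^*$).} I show that whenever $\tau\ge\tau^*(i)$ and the optimal operator is chosen, the phase succeeds with probability at least $1-n^{-\omega(1)}$. The number of improvements in $\tau$ trials dominates a binomial with mean at least $\tau\popt(i')\ge(1+\frac{2}{\ln n}-o(1/\ln n))\sigma$, where $i'\le i+\sigma$. A Chernoff bound with relative deviation $\delta=\Theta(1/\ln n)$ gives failure probability at most $\exp(-\Omega(\sigma/\ln^2 n))$. Since $\sigma=\Omega(\log^4 n)$, this is $\exp(-\Omega(\log^2 n))=n^{-\omega(1)}$. This is exactly where the $\log^4 n$ lower bound on $\sigma$ is used. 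A union bound over all at most $\mathrm{poly}(n)$ phases then lets us assume the optimal operator never fails while $\tau\ge\tau^*$.

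\textbf{Step 2 (ratio of successes to failures).} Each time a phase starts with $\tau\ge\tau^*$, the operator is optimal with probability $1/k$. Once chosen, it keeps succeeding until a failure, which by Step 1 cannot happen while $\tau\ge\tau^*$. Each success multiplies $\tau$ by $F^{-1/\sigma^2}$, so $\sigma$ successes cancel one failure. Going from $\tau^*$ up to $\tmax$ requires a net increase of $\log_F(\tmax/\tau^*)=\Theta(1/\ln n)$ in $\log_F\tau$. That is $\Theta(\sigma/\ln n)$ net failures above level $\tau^*$, which is $\omega(\log n)$ net failures.

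Every failure above $\tau^*$ is caused by a non-optimal operator. After each such failure, with probability at least $1/k$ the optimal operator is picked next. It then runs without failing until either $\tau$ drops below $\tau^*$ or the optimum is reached. Dropping below $\tau^*$ means $\tau$ first has to climb again from below. So each excursion above $\tau^*$ looks like a walk that, with constant probability $1/k$ at every failure step, is sent back below $\tau^*$.

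Reaching $\tmax$ therefore requires $\Theta(\sigma/\ln n)$ consecutive non-optimal picks. There is one subtlety: the last increase could cross from $\tau^*$ into $\tmax$ only if $\tau$ were already close. Starting from $\tau^*$, however, it takes $\Theta(\sigma/\ln n)$ consecutive failures without an optimal pick, which has probability at most $(1-1/k)^{\Theta(\sigma/\ln n)}=e^{-\Omega(\log^3 n)}$. A union bound over the polynomially many excursions (the number of phases is polynomial, e.g.\ bounded by the runtime, which is $O(n^2)$ w.h.p.) gives failure probability $n^{-\omega(1)}$, which is certainly $\mathcal{O}(n^{-c'})$.

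\textbf{Main obstacle.} Since $\tinit=1$, we start far below the barrier, so the initial position is not an issue. The main difficulty will be the final region where $i$ is near $n$. There $\popt(i)$ becomes about $1/n$, and a phase spans $\sigma$ levels, so one must still check that $\tau\popt$ stays at least $(1+\Theta(1/\ln n))\sigma$ throughout the phase. For \rlsone near the end, the improvement probability is exactly $1/n$ and constant, so this case is actually easy. The delicate part is bounding the phase count in the union bound and handling the dependence between fitness progress and $\tau$. I would handle this by stopping the process at the first time $\tau$ crosses $\tau^*$ and applying the Chernoff bound conditionally.
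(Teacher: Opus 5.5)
Your proposal is correct in substance and follows the paper's route. Both arguments use an intermediate threshold a factor $1+\Theta(1/\ln n)$ below $\tmax(i)$ (the paper's $\tau_1(i)$), and a Chernoff bound with $\delta=\Theta(1/\ln n)$ showing the optimal operator succeeds above it with probability $1-n^{-\Omega(\log n)}$, which is where $\sigma=\Omega(\log^4 n)$ is used. They then observe that reaching $\tmax$ needs a long run of failures, each followed by a fresh uniform pick, and finish with a union bound. The only difference is that the paper stops at $\tau_2(i)=\tau_1(i)F^{(\log_2 n)^2/\sigma}$, a gap of $(\log_2 n)^2$ failures, whereas you use the full gap of $\Theta(\sigma/\ln n)$ failures.

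Two small points need repair. First, a phase can raise \textsc{LO} by more than $\sigma$ because of free riders, so you need the paper's Chernoff bound that the increase is at most $3\sigma$ with overwhelming probability. Second, you should not bound the number of excursions via an $O(n^2)$ runtime; that is circular, since this lemma feeds the runtime theorem. Bound it directly instead: every return below $\tau^*(i)$ needs at least one improvement, so there are at most $n+1$ excursions and at most $n/\sigma+1$ optimal-operator phases above $\tau^*$ before the first failure.
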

\begin{proof}
We prove the claim by showing that at any time of the algorithm ($\textsc{LO}(x)=i$) where we have $\tau\leq\tau_1(i)=\left(1+\frac{1}{\ln(n)}\right)\cdot\sigma\cdot\frac{1}{\popt(i)}$ and then failures increase $\tau$ over $\tau_1(i)$, the probability that the algorithm increases $\tau$ to over $\tau_2(i)=\tau_1(i)\cdot F^{(\log_2(n))^2/\sigma}<\tmax(i)$
or more without first reaching or going below $\tau_1(i)$ is at most $n^{-c}$ for any constant $c>0$. We then prove that, with probability at least $1-n^{-c^\prime}$ for any constant $c^\prime>0$, this does not happen at any point throughout the optimisation process.

We first show that $\tau_2(i)<\tmax(i)$. By a Maclaurin series argument, we have that
\begin{align*}
    F^{(\log_2(n))^2/\sigma}&=1+\frac{\ln(F)(\log_2(n))^2}{\sigma}+o\left(\frac{\ln(F)(\log_2(n))^2}{\sigma}\right)\\ &=1+\mathcal{O}\left(\frac{1}{(\log(n))^{2}}\right),
\end{align*}
since $\sigma=\Omega(\log^4(n))$. Hence:
\begin{align*}
\tau_2(i)&=\tau_1(i)\cdot F^{{(\log_2(n))^2}/{\sigma}}\\
&=\left(1+\mathcal{O}\left(\frac{1}{(\log(n))^{2}}\right)\right)\cdot\left(1+\frac{1}{\ln(n)}\right)\cdot\frac{\sigma}{\popt(i)} \\
&\leq\left(1+\frac{1}{\ln(n)}+o\left(\frac{1}{\log(n)}\right)\right)\cdot\frac{\sigma}{\popt(i)}\\
&<\left(1+\frac{4}{\ln(n)}\right)\cdot\frac{\sigma}{\popt(i)}=\tmax(i).
\end{align*}
To increase $\tau$ to $\tau>\tau_2(i)$ from $\tau\leq\tau_1(i)$, at least $(\log_2(n))^2$ failures are necessary. Hence, at least $(\log_2(n))^2$ random operator choices are necessary. With probability at least $1-n^{-\log_2(n)}$, the optimal operator is chosen at least once before $\tau$ is increased above $\tau_2(i)$.

When an improvement is constructed by ARG$_k$ on \LeadingOnes, the fitness of the current solution increases by $1$ plus the number of \emph{free riders}:
the number of $1$-bits immediately following the first mutated bit in the offspring solution. As the bits past the first $0$-bit in the current solution
remain uniformly distributed, an improving solution contains at most $1$ free rider bit in expectation. Applying the classic multiplicative Chernoff bound
(see e.g., \citep[Corollary 1.10 (c)]{Doerr2011bookchapter}) to the number of free riders encountered over the $\sigma$ improvements constructed in a
successful phase yields that with overwhelming probability fewer than $2\sigma$ free riders are observed. Hence, the \LeadingOnes value of the current solution
is increased by no more than $3\sigma$ in a successful phase with overwhelming probability.

Let $Y$ be a binomially-distributed random variable with parameters $\tau_1(i)$ and $\popt(i+3\sigma)$. The number of improvements produced by the
optimal operator in a period of $\tau\geq\tau_1(i)$ iterations stochastically
dominates $Y$. Applying a classic multiplicative Chernoff bound with $E[Y] \geq \mu:=\tau_1(i)\cdot\popt(i+3\sigma)=\left(1+1/{\ln(n)}\right)\cdot\left(({n-3\sigma})/{n}\right)\cdot\sigma=\Theta(\sigma)$ and
$\delta:= \frac{n-3\sigma\ln(n)-3\sigma}{(1+\ln(n))(n-3\sigma)}=\Theta(1/\log(n))$ (such that $(1-\delta)\mu = \sigma$) yields
\begin{align*}
\Pr[Y<\sigma]\leq \Pr[Y\leq\sigma]&\leq\exp\left(-\left(\frac{n-3\sigma\ln(n)-3\sigma}{(1+\ln(n))(n-3\sigma)}\right)^2\cdot\left(1+\frac{1}{\ln(n)}\right)\cdot\frac{n-3\sigma}{n}\cdot\frac{\sigma}{2}\right)\\
&=\exp\left(-\Theta\left(\frac{\sigma}{\log^2(n)}\right)\right)
=n^{-\Omega(\log(n))},
\end{align*}
since $\sigma=\Omega(\log^4(n))$. 

ARG requires $\sigma\cdot (\log_2(n))^2$ consecutive successes by the optimal operator to return $\tau$ to below $\tau_1(i)$, cancelling out the increase in $\tau$ given by at most $(\log_2(n))^2$ failures from the non-optimal operators. Given a success probability of at least $1-n^{-\Omega(\log(n))}$, the probability of $\sigma\cdot (\log_2(n))^2$ consecutive successes given that each success contributes a fitness increase of at most $6\sigma$ (denoted as the event $Y^\prime$) is, by a union bound,
\begin{align*}
\Pr[Y^\prime]&\geq \left(1-(1-n^{-\Omega(\log(n))})\cdot n^{-\Omega(\log(n))}-n^{-\Omega(\log (n))}\right)^{\sigma (\log_2(n))^2}\\
&\geq 1-2\cdot n^{-\Omega(\log(n))}\cdot\sigma (\log_2(n))^2=1-n^{-\Omega(\log(n))}.
\end{align*}

Hence, the probability of $\tau$ exceeding $\tau_2(i)<\tmax(i)$ before returning below $\tau_1(i)$ (denoted as the event $Z$) is at most 
\begin{align*}
\Pr[Z]&\geq\left(1-n^{-\Omega(\log(n))}\right)\cdot\left(1-n^{-\log_2(n)}\right)\\
&\geq 1-2n^{-\Omega(\log(n))} .
\end{align*}

There will be at most $n/\sigma$ successful phases throughout the optimisation process and hence at most $n/\sigma$ times $\tau<\tau_1(i)$ occurs (which invokes the previous argument). Hence, the probability of $\tau$ not exceeding $\tmax(i)$ throughout the optimisation process (denoted as the event $\TauBoundEvent$) is at most
\begin{align*}
\Pr[\TauBoundEvent] & \geq\left(1-2n^{-\Omega(\log(n))}\right)^{n/\sigma}
\geq 1-2n^{-\Omega(\log(n))}\cdot n/\sigma \\
& = 1-n^{-\Omega(\log(n))},
\end{align*}
which is greater than $1-n^{-c^\prime}$ for any constant $c^\prime>0$ when
$n$ is sufficiently large.
\end{proof}
\begin{lemma}\label{lem:while-below-tmax}
Consider a run of \ARGk for \LeadingOnes. While $\tau < \tmax(i)$ and $\lvert \textsc{LO}(x) - \frac{n-(m-1)}{m} \rvert > \beta n$ for all $m=1,\dots,k$, where $\beta=1/\sqrt[4]{\ln(n)}$,
any non-optimal mutation operator produces at least $\sigma$ improvements within
$\tau$ iterations with probability at most $o(1)$.
\end{lemma}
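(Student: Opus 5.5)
The plan is to show that, at distance more than $\beta n$ from every border point, the best non-optimal operator is worse than the optimal one by a factor $1-\Omega(\beta)$. A Chernoff bound then shows that $\tau<\tmax(i)$ iterations are too few for it to collect $\sigma$ improvements. The proof has three steps.

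\textbf{Step 1: a uniform multiplicative gap.} Write $p_m(i)=\Pr[\Imp_m\mid \textsc{LO}(x)=i]$, using Lemma~\ref{moplemma}. For consecutive operators, the ratio is
\[
\frac{p_{m+1}(i)}{p_m(i)}=\frac{(m+1)(n-i-m)}{m(n-m)}.
\]
This ratio is strictly decreasing and linear in $i$, and equals $1$ exactly at $i=\frac{n-m}{m+1}$ (Corollary~\ref{moplemma2}). Its derivative in $i$ is $-\frac{m+1}{m(n-m)}=-\Theta(1/n)$ since $k=\Theta(1)$. Hence if $|i-\frac{n-m}{m+1}|>\beta n$, the ratio differs from $1$ by at least $c\beta$ for a constant $c>0$.

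Fix $i$ outside all border regions, and let $m^*$ be the optimal index. Every other operator $m$ is separated from $m^*$ by a chain of consecutive comparisons, and $i$ lies on the correct side of each crossing point. The comparison adjacent to $m^*$ already gives a factor $1-\Omega(\beta)$. The remaining ratios along the chain are at most $1$: they lie on the same side of their crossings, because the crossing points $\frac{n-m}{m+1}$ are monotone in $m$, so the chain goes in one direction. Multiplying along the chain gives
\[
\pnop(i)\le(1-c\beta)\,\popt(i).
\]
For $i$ close to the final segment $\textsc{LO}\ge n-(a+1)$, only $\rlsone$ can improve, so the bound is trivial there. I also need the gap to persist while the phase runs: over the phase, LO moves by at most $O(\sigma)=o(\beta n)$ with overwhelming probability (free-rider Chernoff argument as in Lemma~\ref{lem:stay-below-tmax}). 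So I would either shrink $\beta$ by a constant factor, or simply use that every $p_m$ is decreasing in $i$, so the improvement probability during the phase is at most its value at the start state $i$.

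\textbf{Step 2: Chernoff on the improvements in one phase.} The number of improvements produced by a non-optimal operator within $\tau<\tmax(i)$ iterations is stochastically dominated by $X\sim\mathrm{Bin}(\tmax(i),\pnop(i))$. This uses the monotonicity above: before the phase reaches $\sigma$ successes the LO value can only increase, so each step's success probability is at most $\pnop(i)$. Then
\[
E[X]\le\left(1+\frac{4}{\ln n}\right)(1-c\beta)\,\sigma\le(1-c'\beta)\,\sigma,
\]
because $\beta=(\ln n)^{-1/4}\gg 4/\ln n$. The multiplicative Chernoff bound with relative deviation $\delta=\Theta(\beta)$ then gives
\[
\Pr[X\ge\sigma]\le\exp(-\Omega(\beta^2\sigma)).
\]
With $\sigma=\Omega(\log^4 n)$, this is $\exp(-\Omega(\log^{3.5} n))=o(1)$, and indeed superpolynomially small.

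\textbf{Main obstacle.} The delicate step is Step 1: proving that the gap is $\Omega(\beta)$ for every non-optimal operator, not just the neighbouring one, and handling the $i$-dependence of the optimal operator index $m^*$. I would first try the chain-of-consecutive-ratios argument above, checking carefully that the crossing points are ordered. If that fails, I would fall back on a direct computation of $p_{m'}(i)/p_{m^*}(i)$ as an explicit polynomial in $i/n$, which is $1-\Omega(\beta)$ by compactness for constant $k$.
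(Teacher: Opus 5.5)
Your proposal is correct and follows essentially the same route as the paper. Both show a relative gap of $\Theta(\beta)$ between the best non-optimal and the optimal improvement probability, so that $\tau<\tmax(i)$ iterations yield at most $(1-\Theta(\beta))\sigma$ expected improvements, and then apply a multiplicative Chernoff bound with $\delta=\Theta(\beta)$ to get $\exp(-\Theta(\beta^2\sigma))=o(1)$. Your chain of consecutive ratios and your pointwise evaluation at each $i$ (rather than at the region endpoints) spell out two steps the paper only asserts: the extension to non-adjacent operators, and the extension from the endpoints of each region to its interior.
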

\begin{proof}
We know from Corollary~\ref{moplemma2} that the \rlsm operator is the optimal operator when 
$$\frac{n-m}{m+1}\leq \textsc{LO}(x)\leq \frac{n-(m-1)}{m}.$$
Furthermore, as a direct consequence of Lemma~\ref{moplemma} and Corollary~\ref{moplemma2}, we have that the \textproc{RLS$_{m-1}$} and \textproc{RLS$_{m+1}$} operators are the next best operators within this region (since \textproc{RLS$_{m+1}$} is the second best operator at the start of the region and \textproc{RLS$_{m-1}$} is the second best operator at the end of the region). Thus, proving the claim of the lemma for these two operators within the region 
$$\frac{n-m}{m+1}+\beta n\leq \textsc{LO}(x)\leq \frac{n-(m-1)}{m}-\beta n$$
for $1\leq m\leq k$ will prove the lemma statement. Note that only one side of the argument is necessary for \rlsone and \rlsk.

We will show at the beginning of the region that \textproc{RLS$_{m+1}$} fails to produce at least $\sigma$ improvements within $\tau$ iterations with probability at least $o(1)$, noting that the results will hold for all RLS operators with larger neighbourhood sizes. Since the improvement probability of \textproc{RLS$_{m+1}$} shrinks faster than $\tmax$ grows throughout the region, the expected number of improvements produced by \textproc{RLS$_{m+1}$} in $\tmax$ iterations at the beginning of the region provides an upper bound on the expected number of improvements found throughout the region. We will repeat the argument by showing at the end of the region that \textproc{RLS$_{m-1}$} fails to produce at least $\sigma$ improvements within $\tau$ iterations with probability at least $o(1)$, noting that the results will hold for all RLS operators with smaller neighbourhood sizes. Similarly, the value of $\tmax$ grows faster than the improvement probability of \textproc{RLS$_{m-1}$} shrinks, so the expected number of improvements found by \textproc{RLS$_{m+1}$} in $\tmax$ iterations is bounded from above by the expected number of improvements at the end of the region.

We begin by proving the claim for \textproc{RLS$_{m+1}$} at the beginning of the region, showing that the operator is unsuccessful with high probability $1-o(1)$ when $\textsc{LO}(x)=\frac{n-m}{m+1}+\beta n$. We refer to \textproc{RLS$_{m+1}$} as the \textit{non-optimal operator}. 

Let $Y$ be a binomially-distributed random variable with parameters 
\begin{align*}
    &\tmax\left(\frac{n-m}{m+1}+\beta n\right)=\left(1+\frac{4}{\ln(n)}\right)\cdot\sigma\cdot\frac{n\left(n-\frac{n-m}{m+1}-\beta n-m\right)!(n-1)!}{m\left(n-\frac{n-m}{m+1}-\beta n-1\right)!(n-m)!}
\end{align*} and 
\begin{align*}
&\pnop\left(\frac{n-m}{m+1}+\beta n\right)=\frac{(m+1)\left(n-\frac{n-m}{m+1}-\beta n-1\right)!(n-m-1)!}{n\left(n-\frac{n-m}{m+1}-\beta n-m-1\right)!(n-1)!}.
\end{align*}
Since $\pnop(i)$ is a decreasing function, the number of improvements produced by the non-optimal operator in a period of $\tau<\tmax\left(\frac{n-m}{m+1}+\beta n\right)$ iterations is stochastically dominated by $Y$. Applying the classic multiplicative Chernoff bound with $\mu := E[Y]  =
\left(1+\frac{4}{\ln(n)}\right)\cdot\sigma\cdot\frac{(m+1)(n-\frac{n-m}{m+1}-\beta n-m)}{m(n-m)}$ and $\delta:=\frac{((4-4\beta)m-4\beta)n-n\beta(m+1)\ln (n)-4m^2}{(4+\ln (n))(((\beta-1)m+\beta)n+m^2)}=\Theta(\beta)$ (such that $(1+\delta)\mu = \sigma$) yields
\begin{align*}
	&\Pr[Y \geq \sigma] \leq \exp\Bigg(\\
 & -\left(\frac{((4-4\beta)m-4\beta)n-n\beta(m+1)\ln (n)-4m^2}{(4+\ln(n))(((\beta-1)m+\beta)n+m^2)}\right)^2\\
&\cdot\left(\left(1+\frac{4}{\ln(n)}\right)\cdot\sigma\cdot\frac{(m+1)(n-\frac{n-m}{m+1}-\beta n-m)}{m(n-m)}\right)\cdot\frac{1}{3}\Bigg)\\
	&=\exp(-\Theta(\beta^2\sigma))= o(1).
\end{align*}

We now repeat the argument for the other side of the region, by proving that \textproc{RLS$_{m-1}$} is unsuccessful with high probability $1-o(1)$ when $\textsc{LO}(x)=\frac{n-(m-1)}{m}-\beta n$. We now refer to \textproc{RLS$_{m-1}$} as the \textit{non-optimal operator}.

\begin{sloppypar}
Let $Y$ be a binomially-distributed random variable with parameters 
$\tmax\left(\frac{n-(m-1)}{m}-\beta n\right)=\left(1+\frac{4}{\ln(n)}\right)\cdot\sigma\cdot\frac{n\left(n-\frac{n-(m-1)}{m}+\beta n-m\right)!(n-1)!}{m\left(n-\frac{n-(m-1)}{m}+\beta n-1\right)!(n-m)!}$ and 
$\pnop\left(\frac{n-(m-1)}{m}-\beta n\right)=\frac{(m-1)\left(n-\frac{n-(m-1)}{m}+\beta n-1\right)!(n-m+1)!}{n\left(n-\frac{n-(m-1)}{m}+\beta n-m+1\right)!(n-1)!}$. Since $\pnop(i)$ is a decreasing function, the number of improvements
produced by the non-optimal operator in a period of $\tau<\tmax\left(\frac{n-(m-1)}{m}-\beta n\right)$ iterations is
stochastically dominated by $Y$. Applying the classic multiplicative Chernoff bound with $\mu := E[Y]  =
\left(1+\frac{4}{\ln(n)}\right)\cdot\sigma\cdot\frac{(m-1)(n-m+1)}{m(n-\frac{n-(m-1)}{m}+\beta n-m+1)}$ and $\delta:=\frac{4(m-1)(n-m+1)+\beta mn\ln (n)}{(4+\ln (n))(m-1)(n-m+1)}=\Theta(\beta)$ (such that $(1+\delta)\mu = \sigma$) yields
\begin{align*}
	&\Pr[Y \geq \sigma] \leq \exp\Bigg(-\left(\frac{4(m-1)(n-m+1)+\beta mn\ln (n)}{(4+\ln(n))(m-1)(n-m+1)}\right)^2\\
	&\cdot\left(1+\frac{4}{\ln(n)}\right)\cdot\sigma\cdot\frac{(m-1)(n-m+1)}{m(n-\frac{n-(m-1)}{m}+\beta n-m+1)}\cdot\frac{1}{3}\Bigg)\\
	&=\exp(-\Theta(\beta^2\sigma))= o(1).
\end{align*}
\end{sloppypar}

The previous arguments will also hold for RLS operators with neighbourhood size larger than \textproc{RLS$_{m+1}$}, and smaller than \textproc{RLS$_{m-1}$} respectively. Thus, any non-optimal operator will fail with high probability $1-o(1)$ to produce at least $\sigma$ improvements within $\tau$ steps within the region $\lvert \textsc{LO}(x) - \frac{n-(m-1)}{m} \rvert > \beta n$ for $m=1,\dots,k$, proving the Lemma statement.
\end{proof}

We can now bound $E[T_\mathrm{NS}]$ and $E[T_\mathrm{FP}]$, conditioned on the event that $\tau<\tmax(i)$ holds throughout the optimisation process.
\begin{lemma}\label{lem:unsuccessful-or-failed}
Consider a run of \ARGk for \LeadingOnes. Let $\TauBoundEvent$ denote the event that $\tau$ remains below $\tmax(i)$ throughout the optimisation process. While $\lvert\textsc{LO}(x)-\frac{n-(m-1)}{m}\rvert>\beta n$ for $1<m\leq k$, and assuming that $\TauBoundEvent$ holds,
\begin{enumerate}
    \item The expected number of iterations \ARGk spends in failed phases is $E[T_\mathrm{FP}\mid\TauBoundEvent]=o(n^2)$;
    \item The expected number of iterations \ARGk spends in successful phases applying any non-optimal operator is $E[T_\mathrm{NS}\mid\TauBoundEvent]=o(n^2)$.
\end{enumerate}
\end{lemma}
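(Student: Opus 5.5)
The plan is to charge every failed phase and every non-optimal successful phase to successful phases of the optimal operator. Each of these has length at most $\tau<\tmax(i)=O(\sigma n)$ on $\TauBoundEvent$. The reason is that outside the border regions $\popt(i)=\Omega(1/n)$: for $i\le 3n/4$ we have $\popt(i)\ge \Pr[\Imp_1\mid i]=1/n$, and beyond the last border every RLS operator still has improvement probability at least $1/n$. So the cost of any single phase is $O(\sigma n)$, and it suffices to show that the expected number of such phases is $o(n/\sigma)$. There are at most $n/\sigma$ successful phases in total, since each produces at least $\sigma$ fitness improvements. So the whole task is to bound the counts.

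\textbf{Failed phases of non-optimal operators and non-optimal successful phases.} Each time a new operator is drawn (after a failure), it is non-optimal with probability $(k-1)/k$. By Lemma~\ref{lem:while-below-tmax}, while $\tau<\tmax(i)$ and we are outside the border regions, such an operator produces $\sigma$ improvements within one phase with probability at most $q=e^{-\Omega(\beta^2\sigma)}=o(1)$. With $\beta=1/\sqrt[4]{\ln n}$ and $\sigma=\Omega(\log^4 n)$, this gives $q=e^{-\Omega(\log^{7/2}n)}$. Therefore, once chosen, a non-optimal operator runs a geometric number of successful phases, with expectation $q/(1-q)=o(1)$, before failing. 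This bounds $E[T_\mathrm{NS}\mid\TauBoundEvent]$ by $O(\sigma n)\cdot o(1)$ times the expected number of operator selections. It remains to bound the number of failures; every selection is preceded by a failure, except the first.

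\textbf{Counting failures via the $\tau$-potential.} Each failure multiplies $\tau$ by $F^{1/\sigma}$ and each success divides it by $F^{1/\sigma^2}$. Since $\tau\ge$ its minimum is not bounded below, I will instead use the upper bound: $\tau_{\text{final}}\le\tmax=O(\sigma n)$ on $\TauBoundEvent$, and $\tinit=1$. Writing $N_F$ and $N_S$ for the numbers of failures and successes, this gives $N_F/\sigma - N_S/\sigma^2\le \log_F(\tmax)=O(\log n)$. Hence
\[
N_F\le \frac{N_S}{\sigma}+O(\sigma\log n)\le \frac{n}{\sigma^2}+O(\sigma\log n),
\]
using $N_S\le n/\sigma$. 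Multiplying by the phase length $O(\sigma n)$ gives
\[
E[T_\mathrm{FP}\mid\TauBoundEvent]=O\!\left(\frac{n^2}{\sigma}+\sigma^2 n\log n\right)=o(n^2).
\]
This is exactly where $\sigma=o(\sqrt{n/\log n})$ is needed, and $\sigma=\omega(1)$ kills the first term. Failed phases of the optimal operator are counted by the same bound, so no separate Chernoff argument is required. Combined with the previous paragraph, $E[T_\mathrm{NS}\mid\TauBoundEvent]\le O(\sigma n)\cdot o(1)\cdot(N_F+1)=o(n^2)$ as well.

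\textbf{Main obstacle.} The main difficulty is the potential argument's dependence on the varying $\tmax(i)$ and on phases straddling a border region. Phases partially inside border regions can be attributed to $T_\mathrm{mid}$, or bounded by the same count. One must also make sure that conditioning on $\TauBoundEvent$ does not distort the per-phase probability $q$. For the latter, I would work on the intersection with the complement events and use that $\Pr[\TauBoundEvent]=1-o(1)$. Then any conditional probability is at most $q/\Pr[\TauBoundEvent]=o(1)$, which suffices.
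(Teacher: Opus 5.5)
Your proposal is correct and is essentially the paper's proof. The same balance relation $\tend=\tinit F^{N_\mathrm{F}/\sigma-N_\mathrm{S}/\sigma^2}$, together with $N_\mathrm{S}\le n/\sigma$ and $\tend<\tmax(n)=(1+o(1))\sigma n$, gives $N_\mathrm{F}=\mathcal{O}(n/\sigma^2+\sigma\log n)$ and hence $E[T_\mathrm{FP}\mid\TauBoundEvent]=o(n^2)$. Lemma~\ref{lem:while-below-tmax} then handles $T_\mathrm{NS}$.

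The only difference is in that last step. The paper charges the $o(1)$-probability non-optimal successes to all $N_\mathrm{S}+N_\mathrm{F}=\mathcal{O}(n/\sigma)$ phases, whereas you charge them to the at most $N_\mathrm{F}+1$ operator selections. For your version you should also add the at most $k-1$ border exits, after which a previously optimal operator becomes non-optimal without a new selection; the paper's per-phase count does not need this.

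One minor slip: $\popt(i)\ge 1/n$ holds for every $i<n$ simply because \rlsone improves with probability exactly $1/n$. Your claim that every RLS operator does so beyond the last border is false (e.g.\ \rlstwo cannot improve at $\textsc{LO}(x)=n-1$), though this does not affect the argument.
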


\begin{proof}
For $T_\mathrm{FP}$ and $T_\mathrm{NS}$, bounds on the number of successful phases with the non-optimal operators as well as the total number of failed phases are needed. Let $N_\mathrm{S}$ and $N_\mathrm{F}$ denote the number of successful and failed phases (respectively) that occur before the global optimum is constructed and $\tend$ be the value of $\tau$ when the global optimum is constructed. $N_\mathrm{F}$ can be bounded by observing that, regardless of the order of the phases,
the following balance relation is valid:
$\tend  = \tinit F^{N_\mathrm{F}/\sigma - N_\mathrm{S}/\sigma^2}$. 

Conditional on $\TauBoundEvent$, we have $\tend < \max_{0\leq i\leq n}\tmax(i)\leq \tmax(n) = (1+o(1))\sigma n$ and hence, given $\tinit=1$,
\begin{align*}
    \log_\mathrm{F}(\tmax(n)) & \geq N_\mathrm{F}/\sigma - N_\mathrm{S}/\sigma^2.
\end{align*}

As each successful phase provides at least $\sigma$ fitness improvements, we can bound $N_\mathrm{S} \leq n/\sigma$ and then bound $N_\mathrm{F}$:
\begin{align*}
    N_\mathrm{F} & \leq N_\mathrm{S}/\sigma + \sigma \log_\mathrm{F} (\tmax(n))
    = \mathcal{O}(n/\sigma^2 + \sigma \log (n)).
\end{align*}

This bounds the number of iterations spent in failed phases:
\begin{align*}
E[T_\mathrm{FP} \mid \TauBoundEvent]
	&\leq \tmax(n) \cdot N_\mathrm{F} 
\\  &\le (1+o(1)) \sigma n \cdot \mathcal{O}(n/\sigma^2 + \sigma \log (n)) = o(n^2).
\end{align*}

Except during the iterations spent in the border regions of the search space (which are counted by $T_\mathrm{mid}$ and bounded by Lemma~\ref{lem:middle-is-easy}), the probability that a non-optimal operator produces $\sigma$ improvements within a single phase is at most $o(1)$ by Lemma~\ref{lem:while-below-tmax}. Conditional on $\TauBoundEvent$, there are at most $N_\mathrm{S}+N_\mathrm{F} = \mathcal{O}(n/\sigma)$ phases in total and hence at most $o(1)\cdot \mathcal{O}(n/\sigma)= o(n/\sigma)$ successful phases with the non-optimal operators, which, combined with $\tau < \tmax(i) \leq \tmax(n)$ yields a bound on the number of iterations spent in successful phases using the non-optimal operators:
\[ E[T_\mathrm{NS} \mid \TauBoundEvent]
	\leq \tmax(n) \cdot o(n/\sigma) = o(n^2).\qedhere \]
\end{proof}

We now present a coarse upper bound on the expected runtime of \ARGk for \LeadingOnes that does not require $\tau<\tmax(i)$ to hold throughout. Before this, we present the following technical lemma, which allows us to bound the probability that a process on the non-negative integers, which in expectation decreases in value, remains above its initial value after a certain amount of time. A similar result has been proven by \cite{KotzingLW15}. As witnessed by the $\ln\left(\frac{1}{(a+1)p}\right)$ term, the bound presented in Lemma~\ref{loccprob} is stronger when the expected movement away from zero is significantly smaller than the expected movement towards zero. We need this stronger result in Lemma~\ref{lem:unbounded-tmax}.
\begin{lemma}\label{loccprob}
  Let $X_0, X_1, \dots$ be a random process on the non-negative integers. Assume that there are $a \in \N_{\ge 1}$ and $p \in (0,\frac 1{e(a+1)})$ such that for all $t$ and all $k \ge 1$, we have $\Pr[X_{t+1} = X_t + a \mid X_t = k] \le p$ and $\Pr[X_{t+1} = X_t - 1 \mid X_t = k] = 1 - \Pr[X_{t+1} = X_t + a \mid X_t = k]$. Assume further that $X_0 = 0$. Then for all $k \in \N$ and all $t \in \N$, we have
$\Pr[X_t \ge k] \le \exp\left(-\frac{(k-1)(1-(a+1)p)}{a(a+1)}\left(\ln\left(\frac{1}{(a+1)p}\right)-1\right)\right) K (a+1),$
where $K$ is an absolute constant.
\end{lemma}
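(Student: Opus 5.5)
The plan is to compare $X_t$ with a biased random walk via an exponential supermartingale, then handle the boundary at $0$ by excursion decomposition. Choose $\lambda>0$ such that, at every state $k\ge 1$, the process $e^{\lambda X_t}$ does not increase in expectation. Such a $\lambda$ must satisfy $\phi(\lambda) := p e^{\lambda a} + (1-p)e^{-\lambda} \le 1$. Since the drift-up probability is at most $p$ and moving up is the only increase, $E[e^{\lambda X_{t+1}}\mid X_t=k] \le e^{\lambda k}\phi(\lambda)$ for $k\ge1$; the bound uses $e^{\lambda a}\ge e^{-\lambda}$, so replacing the true up-probability by $p$ only increases the expectation.

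The key computation is to pick $\lambda$ explicitly. I would take $\lambda = \frac{1}{a}\ln\frac{1}{(a+1)p}$ scaled by a factor of order $\frac{1-(a+1)p}{a+1}$, and verify $\phi(\lambda)\le 1$ using $e^{-\lambda}\le 1-\lambda+\lambda^2/2$ together with $pe^{\lambda a}\le p\cdot((a+1)p)^{-c}$. The condition $p<1/(e(a+1))$ is exactly what makes $\ln\frac{1}{(a+1)p}-1>0$, so that a positive $\lambda$ of order $\frac{1-(a+1)p}{a(a+1)}\bigl(\ln\frac{1}{(a+1)p}-1\bigr)$ works. This matches the exponent in the claim with $k-1$ in place of $k$. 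I expect this calibration to be the main obstacle: getting $\lambda$ large enough to produce the logarithmic gain while keeping $\phi(\lambda)\le1$ uniformly in $a$. I would first try to set $\lambda$ so that $pe^{\lambda a}=\frac{1}{a+1}\cdot\frac{(a+1)p}{e}\cdot$(const), and then check the first-order terms.

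For the boundary at $0$, where the transition law is unconstrained, I would decompose by the last visit to $\{0,\dots,a\}$ before time $t$. Any path reaching $k$ at time $t$ must leave the set $\{X\le a\}$ from some state of at most $2a$ (one jump of size at most $a$; for a general move from $0$ I would use the allowed step structure). From there it must climb to $k$ without returning to $\{0\}$. By optional stopping for the supermartingale $e^{\lambda X}$ killed on hitting $0$, the probability of ever climbing from level $j\le a+1$ to level $k$ before hitting $0$ is at most $e^{-\lambda(k-j)}$.

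The number of excursions is unbounded in $t$, so a naive union bound over excursion starts fails. Instead I would bound $\Pr[X_t\ge k]$ directly: $E[e^{\lambda X_t}\mathbf 1\{\text{current excursion}\}]$ is controlled by a renewal argument, with a geometric sum $\sum_s \Pr[\text{excursion started at }s \text{ still alive at } t]$. This sum is bounded by an absolute constant times $(a+1)$ because excursions end at rate bounded below (the downward drift is $1-(a+1)p\ge 1-1/e$). That supplies the constant factor $K(a+1)$. Markov's inequality on $e^{\lambda X_t}$ then gives the stated bound.
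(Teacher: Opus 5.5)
\paragraph{The gap.} Verifying $\phi(\lambda)\le 1$ is not the hard part. The hard part is the sum over excursion starts, and there your argument breaks in a way that also rules out your choice of $\lambda$. Write $c=(a+1)p$, $L=\ln(1/c)$ and $r=\frac{(1-c)(L-1)}{a(a+1)}$ for the target rate.

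Take an excursion entered at level $j$ at time $s+1$. A $\lambda$ with $\phi(\lambda)\le 1$ only gives $E[e^{\lambda X_t}\mathbf{1}\{\text{alive at } t\}]\le e^{\lambda j}$, with no decay in $t-s$. This quantity does \emph{not} factor as $e^{\lambda j}\Pr[\text{alive at } t]$. For $j=1$, one step after entry the left side is $qe^{\lambda(1+a)}$ and the right side is $qe^{\lambda}$, where $q\le p$ is the actual up-probability. So a drift bound on excursion lengths cannot be combined with Markov's inequality, and summing over $s$ you are back at the divergence you flagged yourself.

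What works is strict contraction. On the event that the process stays $\ge 1$ on $[s+1,t]$, iterating $E[e^{\lambda X_{u+1}}\mid\mathcal{F}_u]\le\phi(\lambda)e^{\lambda X_u}$ gives $E[e^{\lambda X_t}\mathbf{1}\{X_u\ge 1 \text{ for } u\in[s+1,t]\}]\le e^{\lambda j}\phi(\lambda)^{t-s-1}$. Summing over $s$ yields $\Pr[X_t\ge k]\le e^{-\lambda(k-j)}/(1-\phi(\lambda))$.

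With this, $\lambda\approx r$ is the wrong choice. From $1-e^{-\lambda}\le\lambda$ and $e^{\lambda a}-1\ge\lambda a$ we get $1-\phi(\lambda)\le\lambda(1-c)$. Hence $1/(1-\phi(r))\ge\frac{a(a+1)}{(1-c)^2(L-1)}$, which is of order $a^2$ for fixed $c$ and unbounded as $c\to 1/e$, not $K(a+1)$.

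\paragraph{The repair.} Decouple $\lambda$ from the target rate by taking $\lambda=\max\{1/a,r\}\ge r$.
\begin{itemize}
\item If $\lambda=1/a$, then $1-e^{-x}\ge x/(1+x)$ gives $1-\phi(\lambda)=(1-p)(1-e^{-1/a})-p(e-1)\ge\frac{1-p-c(e-1)}{a+1}>\frac{1}{2e(a+1)}$.
\item If $\lambda=r>1/a$, then $pe^{\lambda a}\le\frac{c}{a+1}(ce)^{-1/(a+1)}<\frac{1}{e(a+1)}$ and $e^{-\lambda}\le\frac{1}{1+\lambda}<\frac{a}{a+1}$, so $1-\phi(\lambda)>\frac{1-1/e}{a+1}$.
\end{itemize}
This is where $c<1/e$ is genuinely used; a feasible $\lambda>0$ already exists whenever $c<1$.

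\paragraph{The boundary at $0$.} Nothing is assumed about the move out of $0$, so the statement as written needs an extra hypothesis. The $(k-1)$ in the exponent corresponds to entering the positive integers at level $1$. That is what the paper's proof assumes when it asserts some $t_0$ with $X_{t_0}=1$ and $X_{t'}\ge1$ on $[t_0,t]$. Decomposing at the last visit to $0$ with $j=1$ gives $\Pr[X_t\ge k]\le 2e(a+1)e^{-r(k-1)}$, which is the claim with $K=2e$. Your entry at a level up to $2a$ would instead cost a factor $e^{\lambda(2a-1)}$. For $\lambda=r$ this factor is unbounded when $\ln(1/c)\gg a$.

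\paragraph{Comparison with the paper.} The paper dominates the excursion by a free walk with i.i.d.\ steps. It then uses Bennett's inequality to bound the probability that the walk is still positive after $t-t_0\ge(k-1)/a$ steps by $e^{-\rho(t-t_0)}$, with $\rho=\frac{(1-c)(L-1)}{a+1}$, so its prefactor is $1/(1-e^{-\rho})$. The repaired supermartingale route needs no concentration inequality and yields a genuinely absolute $K$. By contrast, $1/(1-e^{-\rho})$ is $O(a+1)$ only if $\ln(1/c)-1$ stays bounded away from $0$.
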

\begin{proof}
  Let $k, t \in \N_{\ge 1}$. To have $X_t \ge k$, there must be a $t_0 \in [0..t]$ such that $X_{t_0} = 1$ and $X_{t'} \ge 1$ for all $t' \in [t_0,t]$. Let $Y_t, t \in [t_0..t]$ be a random process with 
  \begin{itemize}
  \item $Y_{t_0} = 1$, 
  \item $\Pr[Y_{t+1} = Y_t + a \mid Y_t = k] = \Pr[X_{t+1} = X_t + a \mid X_t = k] \le p$ and $\Pr[Y_{t+1} = Y_t - 1 \mid Y_t = k] =\Pr[X_{t+1} = X_t - 1 \mid X_t = k]$ for all $k \in \N_{\ge 1}$ and
  \item $\Pr[Y_{t+1} = Y_t + a \mid Y_t = k] = p$ and $\Pr[Y_{t+1} = Y_t - 1 \mid Y_t = k] = 1-p$ for all $k \le 0$.
  \end{itemize}
  Note that this process is obtained from the process $(X_t \mid X_{t_0} = 1)$ by modifying it only on the non-positive integers. Consequently, $\Pr[X_{t} \ge k \wedge \forall t' \in [t_0,t] : X_{t'} \ge 1 \mid X_{t_0} = 1] = \Pr[Y_{t} \ge k \wedge \forall t' \in [t_0,t] : Y_{t'} \ge 1] \le \Pr[Y_{t} \ge k]$. Note that $Y_t$ is stochastically dominated by a random variable $\tilde Y_t = 1 + \sum_{t' = t_0}^{t-1} Z_{t'}$ with independent $Z_{t'}$ such that $\Pr[Z_{t'} = a] = p$ and $\Pr[Z_{t'} = -1] = 1-p$. We have $E[\tilde Y_t] = 1 + (t-t_0)(pa - (1-p))$, $\Var[\tilde Y_t] = \sum_{t'=t_0}^{t-1} \Var[Z_{t'}] \le (t-t_0) a^2 p$ and trivially $Z_{t'} \le E[Z_{t'}] + (a+1)$ for all $t'$. Note that for $t_0 > t - (k - 1)/a$, we trivially have $\Pr[X_{t} \ge k \mid X_{t_0} = 1] = 0$. Hence let $(t-t_0) \ge (k-1)/a$. By the above, we have $\Pr[X_{t} \ge k \wedge \forall t' \in [t_0,t] : X_{t'} \ge 1 \mid X_{t_0} = 1] \le \Pr[\tilde Y_t \ge k] \le \Pr[\tilde Y_t \ge 1] = \Pr[\tilde Y_t \le E[\tilde Y_t] - \lambda]$ for $\lambda = -(t-t_0)(pa - (1-p))$. Putting $b:=a+1$, we use the variance-based Chernoff bound (see the classic paper by \cite{Hoeffding63} or Theorem~1.12 and the subsequent text in~\citep{Doerr2011bookchapter}) and compute, writing $p = c/(a+1)$,
\begin{align*}  
  &\Pr[\tilde Y_t \le E[\tilde Y_t] - \lambda]\\
  &\le \exp\left(\!-\frac{\lambda}{b}\left(\left(1+\frac{\Var[\tilde Y_t]}{b\lambda}\right) \ln\left(1+\frac{b\lambda}{\Var[\tilde Y_t]}\right)-1\right)\right)\\
 	&\le \exp\left(\!-(t-t_0)\frac{1-p(a+1)}{a+1}\right.
    	\left.\left( \ln\left(1\!+\!\frac{(a+1)(1-p(a+1))}{a^2 p}\right)\!-\! 1\right)\right)\\
 	&\le \exp\left(\!-(t-t_0)\frac{1-c}{a+1}\big(\ln\left(\tfrac 1c\right)-1\big)\right).
\end{align*}
Hence,
\begin{align*}
\Pr[X_t \ge k] &\le \mquad \sum_{t_0 = 0}^{t-(k-1)/a} \mquad \Pr[X_{t} \ge k \wedge \forall t' \in [t_0,t] : X_{t'} \ge 1 \mid X_{t_0} = 1]\\
&\le \sum_{t_0 = 0}^{t-(k-1)/a} \Pr[\tilde Y_t \le E[\tilde Y_t] - \lambda]\\
&\le \sum_{\delta = 0}^{\infty} \exp\left(-(\delta+(k-1)/a)\frac{1-c}{a+1}\left(\ln\left(\frac 1c\right)-1\right)\right)\\
&\le \exp\left(-\frac{(k-1)(1-c)}{a(a+1)}\left(\ln\left(\frac 1c\right)-1\right)\right)\cdot 
  \sum_{\delta = 0}^{\infty} \exp\left(-\frac{1-c}{a+1}\left(\ln\left(\frac 1c\right)-1\right)\right)^\delta\\
& = \exp\left(-\frac{(k-1)(1-c)}{a(a+1)}\left(\ln\left(\frac 1c\right)-1\right)\right) K (a+1),
\end{align*}
where $K$ can be chosen as an absolute constant (independent from $a$ and $c$, provided that $c < 1/e$).
\end{proof}

\begin{lemma} \label{lem:unbounded-tmax}
Consider a run of \ARGk started with an arbitrary initial search point $x$ and an arbitrary initial period length $\tinit \le n^{k+1}$, for \LeadingOnes. Let $T$ be the runtime, that is, the number of fitness evaluations performed up to the point when for the first time the optimal solution is evaluated. Then $E[T] = \mathcal{O}(n^{k+1})$.
\end{lemma}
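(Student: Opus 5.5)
We bound the expected runtime by tracking the potential $\log_F \tau$ together with the fitness, exploiting that the \LeadingOnes value never decreases. The key observation is that every operator $\rlsm$ with $m\le k$ has improvement probability at least $\Pr[\Imp_k\mid \textsc{LO}(x)=i]\ge \frac{1}{n}\prod_{j=1}^{k-1}\frac{n-i-j}{n-j}$ as long as $n-i>k$. Even more crudely, every operator has improvement probability $p_{\min}=\Omega(n^{-k})$ at every non-optimal point: flipping the first $0$-bit together with $m-1$ of the bits after it that are not needed, or for $i$ close to $n$ simply $\Omega(1/\binom{n}{m})$. So, regardless of which operator is chosen, each iteration improves with probability at least $p_{\min}=\Omega(n^{-k})$. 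The difficulty is not the improvement rate but that $\tau$ might become huge, so that time is wasted in failed phases. Since iterations are counted one by one and each iteration improves with probability $\ge p_{\min}$, the crude bound $E[T]\le n/p_{\min}=\mathcal{O}(n^{k+1})$ in fact follows directly. At most $n$ improvements are needed, and the waiting time for each improvement is dominated by a geometric variable with parameter $p_{\min}$, irrespective of the phase structure, $\tau$, or $\tinit$.

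So the plan is first to establish a uniform lower bound on the per-iteration improvement probability. For every $m\in\{1,\dots,k\}$ and every $x$ with $\textsc{LO}(x)=i<n$, the probability that $\rlsm$ produces a strictly better offspring is at least $c\,n^{-m}\ge c\,n^{-k}$ for a constant $c>0$. This is Lemma~\ref{moplemma} when $n-i\ge m+1$. When $i$ is within $k$ of $n$ and the product in Lemma~\ref{moplemma} vanishes, we argue directly: choosing exactly the first $0$-bit plus $m-1$ bits among positions $> i+1$, or, if there are too few, among arbitrary positions, yields an improvement whenever the resulting first-$0$ position moves right. Handling this boundary case, where e.g.\ $\rlsk$ cannot improve at $\textsc{LO}(x)=n-1$ if $k\ge2$ since it must flip a leading one, is the main obstacle.

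If some operator cannot improve near the optimum, we refine the argument. In every phase, the operator is chosen uniformly at random, so with probability $1/k$ it is $\rlsone$, which improves with probability $\ge 1/n$. A phase with $\rlsone$ lasts at least $\min(\tau,\cdot)\ge 1$ iteration. Here we would use the structure of $\tau$: we would bound the total time using Lemma~\ref{loccprob}, applied to the process $X_t=\lceil\sigma\log_F(\tau_t/\tau^\ast)\rceil$ measured in phases. A failed phase with a good operator is unlikely once $\tau$ exceeds $\tau^\ast=\Theta(\sigma n^{k})$, so $X$ moves up by one ($a$ after rescaling) with small probability and moves down otherwise. Lemma~\ref{loccprob} then gives exponential tails for $\tau$ above $\tau^\ast\cdot \mathrm{poly}$, and $\tinit\le n^{k+1}$ ensures the starting point is polynomial.

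The total time is then at most the sum over phases of $\tau$ values. Each improvement requires $\mathcal{O}(k)$ phases in expectation until a capable operator is drawn and, given $\tau\gtrsim\tau^\ast$, succeeds, so the expected time per fitness level is $\mathcal{O}(\tau^\ast/\sigma)=\mathcal{O}(n^{k})$. Summing over $n$ levels gives $\mathcal{O}(n^{k+1})$. The step I expect to be delicate is justifying the tail bound on $\tau$ via Lemma~\ref{loccprob} with appropriate $a$ and $p<1/(e(a+1))$, and absorbing the initial $\tinit\le n^{k+1}$ into the bound.
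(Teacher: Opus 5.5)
\textbf{There is a gap: the last $k$ fitness levels are not handled.} Your opening claim is false, and it fails exactly where the lemma is hard. If $\textsc{LO}(x)=i>n-m$, only $n-i-1<m-1$ positions lie behind the first $0$-bit. So every mutation of \rlsm flips one of the $i$ leading ones, and its improvement probability is exactly $0$. Your fallback of taking the extra bits ``among arbitrary positions'' is precisely such a flip.

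Consequently there is no uniform $p_{\min}=\Omega(n^{-k})$, and $E[T]\le n/p_{\min}$ does not follow. Near the optimum the runtime does depend on the phase structure and on $\tau$: a phase with an incapable operator wastes all of its $\tau$ iterations.

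Your refined argument does not repair this. Lemma~\ref{loccprob} requires that, in every state, $X$ moves up with probability at most $p<1/(e(a+1))$. But once $\textsc{LO}(x)\ge n-k$, at most $k<\sigma$ improvements remain, so no phase started afterwards can be successful. Every phase that does not hit the optimum is a failure and multiplies $\tau$ by $F^{1/\sigma}$; that is, $X$ moves up with probability $1$. For the same reason, your last paragraph (``a capable operator is drawn and succeeds'', $\mathcal{O}(\tau^\ast/\sigma)$ per level) conflates an improvement with a successful phase and does not hold on the last $k$ levels.

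\textbf{The missing idea is to split the run at the first time $T_0$ with $\textsc{LO}(x)\ge n-k$.}

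Before $T_0$, every operator improves with probability at least $n^{-k}$. So your crude argument is correct there and gives $E[T_0]\le n^{k+1}$ regardless of $\tau$. Lemma~\ref{loccprob} is then needed only to show $E[\tau_{T_0}]=\mathcal{O}(n^{k+1})$. Take as reference level the smallest reachable value $\tau'\ge n^{k+1}$; this is at least $\tinit$, so $X_0=0$ as the lemma demands. Your $\tau^\ast=\Theta(\sigma n^k)$ can lie below $\tinit$, which breaks that hypothesis.

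After $T_0$, argue directly with failing phases:
\begin{itemize}
\item Once $\tau\ge n$, each phase chooses \rlsone with probability $1/k$ and then improves with probability at least $1-1/e$.
\item At most $k$ improving phases are needed to reach the optimum.
\item The expected cost is therefore bounded by a series of the form $\sum_j (1-q)^{j-k}F^{j/\sigma}\tau$ with $q=(1-1/e)/k$. This converges because $(1-q)F^{1/\sigma}<1$ for large $n$, which is where $\sigma\to\infty$ is used.
\item The phases before $\tau$ first reaches $n$ cost fewer than $n$ iterations each, and there are only $\mathcal{O}(\sigma\log n)$ of them.
\end{itemize}
This gives $E[T-T_0]=\mathcal{O}(E[\tau_{T_0}]+n^{k+1})$, and hence the lemma.
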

\begin{proof}
While the fitness is less than $n-k$, each of the $k$ operators has a probability of at least $1/n^k$ of finding an improvement. Hence by a simple fitness level argument (see~\citep{Oliveto2025,Jansen2013}, the time $T_0$ to reach a fitness of at least $n-k$ satisfies $E[T_0] \le n^{k+1}$. 

Throughout this first part of the optimisation process, we have that a learning period starting with a $\tau$-value of $n^{k+1}$ or more is successful with probability $1-\exp(-\Theta(n/\sigma))=:1-p$. This is because the expected number of improvements is at least $\tau/n^k$, whereas for a success we need only $\sigma$ improvements. Hence the standard multiplicative Chernoff bound for geometrically distributed random variables (see, e.g., \cite{Doerr2011bookchapter}) shows this claim.

Let $i$ be minimal such that $\tau' := \tinit F^{i/\sigma^2} \ge n^{k+1}$. In other words, $\tau'$ is the smallest $\tau$ value not smaller than $n^{k+1}$ which we could encounter in this run of the algorithm. For any time $t$, let $X_t = \max\{0,\log_{F^{1/\sigma^2}}(\tau_t / \tau')\}$. In other words, if $\tau_t \ge \tau'$, then $X_t$ is such that $\tau_t = \tau' (F^{1/\sigma^2})^{X_t}$; otherwise $X_t = 0$. By definition, we have $X_0 = 0$. Also, for all $t \ge 0$ and all $j \ge 1$, we have $\Pr[X_{t+1} = X_t + \sigma \mid X_t = j] \le p$ and $\Pr[X_{t+1} = X_t - 1 \mid X_t = j] = 1 - \Pr[X_{t+1} = X_t + \sigma \mid X_t = j]$. By Lemma~\ref{loccprob}, we have 
\begin{align*}
\Pr[X_t \ge j] &\le \exp\left(-\frac{(j-1)(1-(\sigma+1)p)}{\sigma(\sigma+1)}\cdot
\left(\ln\left(\frac{1}{(\sigma+1)p}\right)-1\right)\right) \Theta(\sigma+1)\\
&=\exp(-\Theta(nj/\sigma^3))
\end{align*}
for all $t, j \ge 1$, where the implicit constants can be chosen independently of $n, j, \sigma$ provided that $n$ is sufficiently large. We use this to compute $E[\tau_{T_0}] \le \sum_{j=0}^\infty \Pr[X_{T_0} = j] \tau' F^{j/\sigma^2} \le \tau' +  \sum_{j=1}^\infty \tau' F^{j/\sigma^2} \exp(-\Theta(nj/\sigma^3))= \mathcal{O}(\tau') = \mathcal{O}(n^{k+1})$.

We shall use this estimate of $\tau_{T_0}$, the value of $\tau$ used when the fitness first reached or exceeded $n-k$. If at time $T_0$ we already have a fitness of~$n$, then $T_1=0$ and there is nothing to show. So let us assume that the fitness at time $T_0$ is $n-k$ and bound the expectation of $T_1$ from above by estimating the time it takes to optimise the last $k$ bits.

As $k=\Theta(1)$ and $\sigma = \Omega(\log^4(n))$, no learning phase started after a fitness of at least $n-k$ has been reached can be successful (as the
global optimum would be constructed before $\sigma > k$ improvements can be). Thus, it is either the case that the global optimum is found within $2
\tau_{T_0}$ iterations (in which case $T_{1} \leq 2 \tau_{T_0}$), or a random operator choice following a failed phase has occurred. In the latter case, let
$\tau^{(1)} \le \tau_{T_0} F^{1/\sigma}$ denote the value of $\tau$ used in the learning period following this first random operator choice.

Let $P_i$ denote the $i$'th learning period following the first random operator choice once a fitness of $n-k$ has first been reached, $\tau^{(i)}$ denote the
value of $\tau$ used in this period, and $R_i = \sum_{j=i}^\infty r_i$ denote the probability that the optimum is found during period $P_i$ or later. Let
$i_0 \ge 1$ be minimal such that $\tau^{(i_0)} \geq n$. Given that the global optimum has not yet been constructed, the probability that at least one
improvement is constructed during period $P_i$ where $i \ge i_0$ is at least $q_i \geq (1/k)(1 - (1-1/n)^{\tau^{(i)}}) \geq (1/k)(1 - 1/e)$.
For an upper bound on $R_i$, we let $q_i = 0$ while $i < i_0$, and suppose that each learning phase that produced at least one improvement produced
\emph{exactly one} improvement, and thus at most $k-1$ phases prior to $P_i$ are allowed to produce improvements. Consequently, $R_i \le \min\{1,(1-q_i)^{i
- i_0 - k+1}\}$, and 
\begin{align*}
E[T_1] \leq 2\tau_{T_0} + \sum_{j=1}^{\infty} R_j \tau^{(j)}
\end{align*}

For $i\leq i_0 + (k+1)$, $R_i=1$. Furthermore, for $i<i_0$, $\tau^{(i)}\leq n$. Hence we can bound these terms:
$$\sum_{j=1}^{i_0+(k-1)} R_j\tau^{(j)} \leq (i_0-1)\cdot n + (k-1)\cdot\tau^{(1)}\cdot F^{(k-1)/\sigma}.$$

It remains to bound $\sum_{j=i_0+k}^{\infty}R_j\tau^{(j)}$. We estimate
\begin{equation*}
\begin{array}{ll}
\displaystyle\sum_{j=i_0+k}^{\infty} R_j \tau^{(j)} & \le \sum_{j=1}^\infty \left(1-\left(\frac{1}{k}\right)\left(1-\frac{1}{e}\right)\right)^{j} F^{j/\sigma} \tau^{(i_0)} \\
& \le C \tau^{(i_0)}
\end{array}
\end{equation*}
for some constant $C$ (assuming that $n$ is sufficiently large). Note that $\tau^{(i_0)} \le \max\{nF^{1/\sigma}, \tau^{(1)}\} \leq n^{k+1} F^{1/\sigma}$. Hence,
\begin{align*}
E[T_1] &\leq 2\tau_{\tau_0} + (i_0-1)\cdot n + (k-1)\cdot F^{(k-1)/\sigma} \cdot \tau^{(1)} + C\cdot n^{k+1}\cdot F^{1/\sigma}\\
& = \mathcal{O}(n^{k+1}).
\end{align*}

Thus, \ARGk constructs the global optimum for \LeadingOnes within $E[T_0] + E[T_1] = \mathcal{O}(n^{k+1})$ iterations in expectation.
\end{proof}

We can now prove Theorem~\ref{thm:ARG-runtime}, which states that \ARGk has optimal runtime for \LeadingOnes.

\begin{proof}[Proof of Theorem~\ref{thm:ARG-runtime}]
Recall that we will bound 
$T_{\textsc{ARG}_k} = T_\mathrm{mid} + T_\mathrm{S} + T_\mathrm{NS} + T_\mathrm{FP}$
by bounding each of the four contributing components:
\begin{itemize}[leftmargin=*]
\item $T_\mathrm{mid}$, the number of iterations spent in the border regions, where $\lvert \mathrm{LO}(x) - \frac{n-(m-1)}{m} \rvert < \beta n$, $1<m\leq k$ and
$\mathrm{LO}(x)$ is the \LeadingOnes value of the current solution. 
\item $T_\mathrm{S}$, the number of iterations spent in successful phases applying the optimal operator outside the border regions,
\item $T_\mathrm{NS}$, the number of iterations spent in successful phases applying the non-optimal operators outside the border regions,
\item $T_\mathrm{FP}$, the number of iterations spent in failed phases outside the border regions.
\end{itemize}
Lemma~\ref{lem:middle-is-easy} gives $E[T_\mathrm{mid}]=o(n^2)$. For $T_\mathrm{S}$, we note that $E[T_\mathrm{S}] \leq E[T_{k,\mathrm{opt}}]$ (i.e., the result of Theorem~\ref{koptbestcase}), as the optimum would be found in expectation after $E[T_{k,\mathrm{opt}}]$ iterations of applying the optimal mutation operator, while ARG can additionally make progress toward the optimum applying the non-optimal operators, as well as in periods which fail to produce $\sigma$ improvements. Let $\TauBoundEvent$ denote the event that $\tau$ remains below $\tmax(i)$ throughout the optimisation process. Lemma~\ref{lem:unsuccessful-or-failed} states that $E[T_\mathrm{NS}\mid\TauBoundEvent]=o(n^2)$ and $E[T_\mathrm{FP}\mid\TauBoundEvent]=o(n^2)$ respectively.

Combining the four contributing factors, by the linearity of expectation, the expected runtime of ARG is:
\begin{align*}
	&E[T_{\textsc{ARG}_k} \mid \TauBoundEvent]\\
    & \leq E[T_\mathrm{S} \mid \TauBoundEvent] + E[T_\mathrm{FP} \mid \TauBoundEvent] + E[T_\mathrm{NS} \mid \TauBoundEvent] + E[T_\mathrm{mid} \mid \TauBoundEvent] \\
    & \leq E[T_{k,\mathrm{opt}}] + o(n^2),
\end{align*}
noting that the bounds on the expected values of $T_\mathrm{S}$ and $T_\mathrm{mid}$ derived previously also hold when conditioned on
$\TauBoundEvent$.

For an unconditional expectation, we use Lemma~\ref{lem:unbounded-tmax} to bound the expected runtime of ARG when $\TauBoundEvent$ fails to hold as $\mathcal{O}(n^{k+1})$. By Lemma~\ref{lem:stay-below-tmax}, the probability of $\tau$ exceeding $\tmax(i)$ at any point before the global optimum is found can be made $n^{-c'}$ small for any constant $c' > 0$ and hence, for e.g. $c' = k$:
\begin{align*}
	E[T_{\textsc{ARG}_k}] & = \Pr[\TauBoundEvent]\cdot E[T_{\textsc{ARG}_k} \mid \TauBoundEvent]
    + \Pr[\overline{\TauBoundEvent}]\cdot E[T_\mathrm{ARG} \mid \overline{\TauBoundEvent}]
  \\& = E[T_{k,\mathrm{opt}}] + o(n^2) + \mathcal{O}(n^{-k})\cdot \mathcal{O}(n^{k+1})
  \\& = E[T_{k,\mathrm{opt}}] + o(n^2),
\end{align*}
as $\sigma= o(\sqrt{n / \log (n)})$.
\end{proof}
Note that the proof above not only shows that \ARGk has (apart from lower-order terms) the optimal runtime achievable with the given set of operators, but also that with high probability $1-o(1)$, only an $o(1)$ fraction of the iterations use the non-optimal operators. Hence, our proposed self-adjusting mechanism manages to select the most suitable mutation operator in an extremely effective manner. 

\subsection{Adaptive Random Gradient Successfully Handles Larger Sets of Heuristics}
We now prove that the expected runtime of \ARGk for \LeadingOnes is smaller than the expected runtime of any unbiased (1+1) black box algorithm using $m<k$ 
RLS low-level heuristics. As a direct result, we see that the performance of \ARGk improves when it has access to larger sets of low-level heuristics.
\begin{sloppypar}
\begin{theorem}\label{besttheorem}
For any integer $k > 1$, the expected runtime of \ARGk using $H=\{\rlsone,\dots,\rlsk\}$ for \LeadingOnes is smaller in the leading constant than the best-possible expected runtime for \LeadingOnes for any unbiased (1+1) black box algorithm using any strict subset of $H$.
\end{theorem}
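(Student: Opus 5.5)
By Theorem~\ref{thm:ARG-runtime}, $E[T_{\textsc{ARG}_k}] \le E[T_{k,\mathrm{opt}}] + o(n^2)$. It therefore suffices to show that for every strict subset $S \subsetneq H$, the best possible expected runtime $E[T_{S,\mathrm{opt}}]$ of an unbiased (1+1) black box algorithm using only the operators in $S$ satisfies $E[T_{S,\mathrm{opt}}] \ge E[T_{k,\mathrm{opt}}] + c n^2$ for some constant $c>0$. The plan is to express both quantities as fitness-level sums and compare them level by level. As in the derivation preceding Theorem~\ref{koptbestcase}, the optimal algorithm for a given operator set applies, at each level $i$, the operator in the set with the largest improvement probability $\Pr[\Imp_m \mid \textsc{LO}(x)=i]$ from Lemma~\ref{moplemma}. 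Because of the free-rider argument (each improvement advances by $1$ plus a geometric number of free riders, equivalently each level is visited with probability $1/2$), the expected runtime equals $\frac12\sum_{i} 1/\max_{m\in S} \Pr[\Imp_m\mid i]$. This formula holds for an arbitrary set $S$, not only initial segments: the argument of \citep{BoettcherEtAl2010} and of Theorem~\ref{koptbestcase} uses only that the optimal choice maximises the per-level improvement probability.

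The pointwise comparison is immediate. Since $S\subseteq H$, we have $\max_{m\in S}\Pr[\Imp_m\mid i]\le \max_{m\in H}\Pr[\Imp_m\mid i]=\popt(i)$ for every $i$, so each summand for $S$ is at least the corresponding summand for $H$. To obtain a gap of order $n^2$, I will pick some $m^\ast\in H\setminus S$ and use the region from Corollary~\ref{moplemma2} in which $\rlsm$ with $m=m^\ast$ is the unique optimal operator. For $m^\ast<k$ this region is $\frac{n-m^\ast}{m^\ast+1}\le i\le\frac{n-(m^\ast-1)}{m^\ast}$, and for $m^\ast=k$ it is $0\le i\le \frac{n-(k-1)}{k}$. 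In each case the region has length $\Theta(n)$. I then shrink it by a constant fraction $\gamma n$ on both sides. On the shrunken region, I need to show that for every $m\neq m^\ast$ the ratio $\Pr[\Imp_{m^\ast}\mid i]/\Pr[\Imp_m\mid i]\ge 1+\varepsilon$ for a constant $\varepsilon>0$.

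The main technical step is this strict separation, which I would handle through the limit. Writing $i=\alpha n$ with $\alpha$ constant, Lemma~\ref{moplemma} gives $\Pr[\Imp_m\mid i]=(1+o(1))\frac{m}{n}(1-\alpha)^{m-1}$. The ratio for operators $m$ and $m+1$ is therefore $(1+o(1))\frac{m+1}{m}(1-\alpha)$, which is strictly above or below $1$ by a constant margin whenever $\alpha$ is at constant distance from the crossover point $1/(m+1)$. The function $m\mapsto m(1-\alpha)^{m-1}$ is unimodal in $m$, so separation from the neighbouring operators $m^\ast\pm1$ implies separation from all other operators in $H$. The case $m^\ast=k$ needs only a one-sided check, and $m^\ast=1$ is handled symmetrically.

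On the shrunken region, each summand for $S$ then exceeds the corresponding $H$-summand by at least $\varepsilon'\cdot 1/\popt(i)$. The quantity $1/\popt(i)=\Theta(n)$ there, since $\alpha\le 1/2+o(1)<1$ bounds $(1-\alpha)^{-(m-1)}$ by a constant. Summing over the $\Theta(n)$ levels and including the factor $\frac12$ gives $E[T_{S,\mathrm{opt}}]-E[T_{k,\mathrm{opt}}]\ge c n^2$. Combined with Theorem~\ref{thm:ARG-runtime}, for large $n$ this yields $E[T_{\textsc{ARG}_k}]\le E[T_{S,\mathrm{opt}}]-cn^2+o(n^2)$, so the leading constant is strictly smaller. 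One subtlety needs checking: the statement of Theorem~\ref{koptbestcase} is phrased only for initial segments, so I would re-derive the fitness-level formula for a general subset $S$ from the same reasoning.
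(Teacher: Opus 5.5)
Your proposal is correct and follows essentially the same route as the paper: you invoke Theorem~\ref{thm:ARG-runtime}, observe that at every fitness level the best algorithm for a subset $S$ is no faster than the $H$-optimal one, and extract an $\Omega(n^2)$ gap from the region where a missing operator \textsc{RLS}$_{m^\ast}$ is the best operator. Your version is more careful than the paper's, which only asserts the $\Omega(n^2)$ speedup as a consequence of Lemma~\ref{moplemma} and Corollary~\ref{moplemma2}. You justify that gap explicitly through a constant-factor separation on a shrunken region, using the asymptotic form $\frac{m}{n}(1-\alpha)^{m-1}$ and its unimodality in $m$.
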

\end{sloppypar}
\begin{proof}
Theorem~\ref{thm:ARG-runtime} shows that \ARGk matches the best possible performance for an algorithm with the same $k$ mutation operators (given in Theorem~\ref{koptbestcase}), up to lower order terms. 

\begin{sloppypar}
Consider an algorithm with access to $k-1$ mutation operators, $H^\prime=\{\rlsone,\dots,\textsc{RLS}_{k}\}\backslash\rlsm$. Theorem~\ref{koptbestcase} and Theorem~\ref{thm:ARG-runtime} imply that the only difference between the best possible expected runtime of such an algorithm and the expected runtime of \ARGk, occurs when the fitness value satisfies
$$\frac{n-m}{m+1}\leq\textsc{LO}(x)\leq\frac{n-(m-1)}{m}.$$
If $m=k$, then the difference occurs when $0\leq\textsc{LO}(x)\leq \frac{n-(k-1)}{k}$.
\end{sloppypar}

In these regions (for the given \rlsm operator), the expected runtime of \ARGk matches the expected runtime (up to lower order terms) of an algorithm applying only \rlsm in this region. That is,
$$\frac{1}{2}\cdot \sum_{i=(n-m)/(m+1)}^{(n-(m-1))/m}1/(\Pr[\text{Imp}_m\mid \textsc{LO}(x)=i]).$$

A closed-form expression is difficult to present. From Corollary~\ref{moplemma2}, we know that \rlsm is the fastest operator in this region, and optimises the region faster than any other sub-optimal operator, or combination of sub-optimal operators. For $k=\Theta(1)$, the order of this speedup is $\Omega(n^2)$ in expectation (as a direct result of Lemma~\ref{moplemma} and Corollary~\ref{moplemma2}).

We can extend this argument for the best possible performance of an algorithm using any strict subset of $H$. The expected runtimes of the best-possible algorithm would match \ARGk in regions where they share the same operators, while \ARGk is faster by $\Omega(n^2)$ in expectation where the algorithm does not have access to the optimal operator.
\end{proof}

As an example of the application of Theorem~\ref{besttheorem}, consider \ARGk with $k=3$, and a best-case algorithm using $H^\prime=\{\rlsone,\text{RLS}_3\}$. The expected runtimes of the two algorithms will match, except when $(n-2)/3 \leq \textsc{LO}(x) \leq (n-1)/2-1$. In this region, ARG$_3$ will use the \rlstwo with high probability, and will have expected runtime $\frac{2\ln2-\ln3}{4}n^2+o(n^2)\approx 0.07192n^2+o(n^2)$. Using exclusively either \rlsone or RLS$_3$ in this region gives an expected runtime of $\frac{1}{12}n^2+o(n^2)\approx 0.08333n^2$, while the best combination of the two operators gives an expected runtime of $\frac{2\sqrt{3}-3}{6}n^2+o(n^2)\approx 0.07735n^2$. Hence, ARG$_3$ is faster overall by $\approx 0.00543n^2$ in expectation.

\cite{BiedenkappEtAl2022} have recently shown that other sets of low-level RLS heuristics of the same cardinality $k$ may lead to faster runtimes if they have access to larger neighbourhood sizes than $k$. In particular, if the portfolio had cardinality $k$ (i.e., $k$ different RLS heuristics) yet the largest neighbourhood size was strictly greater than $k$ (e.g., $H=\{\rlsone, \rlstwo, \dots, \textsc{RLS}_{k+2}, \textsc{RLS}_{k+5}\}$ with $|H|=k$), it could be faster than the initial segment portfolio with neighbourhood sizes $1,\dots,k$.
We note that if \ARGk has access to different sets of low-level RLS heuristics (i.e., portfolios) other than the \textit{initial segment} portfolio, it would still achieve the best possible runtime available for these portfolios.  

\section{Experimental Supplements}\label{Sec:Experiments}
In the previous section we have proven that for large enough problem sizes $n$, ARG has the optimal possible performance for \LeadingOnes, up to lower order terms. In this section, we perform a set of experiments to assess the performance of ARG with $H=\{\rlsone, \ldots, \rlsk\}$ for practical problem sizes. Our theoretical results rely on the adaptation parameter $\sigma$ to grow slowly with the problem size, i.e., asymptotically between $\Omega(\log^4 n)$ and $o(\sqrt{n/\log n})$.
We experiment with various super-constant values within and outside the theoretical bounds.
We introduce multiplicative constants $c^*$ such that $\sigma$ is sufficiently small compared to $n$, for small problem sizes. We decide to arbitrarily set $c^*$ such that for our smallest problem size ($n=10^3$), $\sigma=4$. We set the initial learning period trivially to $\tinit=1$. We then set $F=1.5$ as suggested by \cite{DoerrDoerr2015} and run all algorithms 100 times.

In Figure~\ref{fig:n-increase}, we plot the average runtimes of ARG as the problem size increases and compare their performance against GRG with fixed $\tau$ values (noting that both achieve the same optimal performance based on the theoretical analyses for sufficiently large problem sizes $n$). We pick the best performing versions of GRG identified by \cite{LissovoiEtAl2020ECJ}  (i.e., $\tau=0.4n\ln n$, $0.5n\ln n$, and the best, $0.6n\ln n$, for the problem size $n=100{,}000$). 
While the runtimes of all the ARG variants are comparable with the three GRG variants, many outperform them, particularly as the number of low-level heuristics $k$ increases. For $k\geq3$, all of the ARG variants outperform the GRG variants. These include all the ones with $\sigma$ values within the theoretical bounds (i.e., $\sigma=c^*\ln^4n$) and various $\sigma$ values outside the bounds (i.e., $\sigma=\sqrt{n}/\ln n$, $\sigma=c^*\sqrt{n}/\ln n$ and $\sigma=c^*\sqrt{n/\ln n}$).
Hence, for realistic problem sizes, better runtimes can be achieved by adapting $\tau$ during the run, especially as the size of the heuristic set increases, highlighting the advantage of automatic parameter adaptation compared to static parameters. Furthermore, while a value of $\sigma$ within the derived range is required to prove optimal runtime, fast performance is achieved experimentally by a wider range of $\sigma$ values.

\begin{figure}[t!]
\centering
\pgfplotstableread{Experiments2/fig1a.tsv}\dtA
\pgfplotstableread{Experiments2/fig1b.tsv}\dtB
\pgfplotstableread{Experiments2/fig1c.tsv}\dtC
\pgfplotstableread{Experiments2/fig1d.tsv}\dtD
\def\xmax{2089296131}
\pgfplotsset{max space between ticks=25,
	xmin=1000, xmax=\xmax, ymin=0.392, ymax=0.502,
	width=0.37\linewidth,height=2in,scale only axis,
	scaled ticks=false,tick label style={/pgf/number format/fixed},
	ymajorgrids=true,
	y tick label style={
		/pgf/number format/.cd,fixed,fixed zerofill,precision=2,
		/tikz/.cd
	},
	xtick placement tolerance=-0.1pt,
}
\begin{tabular}{cc}
\begin{tikzpicture}[trim axis right]
\begin{semilogxaxis}[
    xlabel=$n$, ylabel=Runtime $(/n^2)$,
    legend style={font=\scriptsize, at={(0.99, 0.96)}},
    legend to name={fig1legend},
	 legend columns=3,
]
\addplot[thick,dgreen]          table [x index=0,y index=1] {\dtA};
\addplot[thick,red]             table [x index=0,y index=2] {\dtA};
\addplot[thick,cyan]            table [x index=0,y index=3] {\dtA};
\addplot[thick,brown]           table [x index=0,y index=4] {\dtA};
\addplot[thick,blue]            table [x index=0,y index=5] {\dtA};

\addplot[red,dotted]            table [x index=0,y index=6] {\dtA};
\addplot[cyan,dotted]           table [x index=0,y index=7] {\dtA};
\addplot[blue,dotted]           table [x index=0,y index=8] {\dtA};

\draw [thick, draw=gray]   (axis cs: 1000,0.5) -- (axis cs: \xmax,0.5)
node[pos=1,below left,inner sep=1pt] {\rlsone};
\draw [thick, draw=gray]   (axis cs: 1000,0.4232867952) -- (axis cs:\xmax,0.4232867952) node[pos=0,above right,inner sep=1pt] {$2_{opt}$
};

\legend{$\sigma=\sqrt{n}/\ln n$, $\sigma=c^*\ln^4 n$, $\sigma=c^* \sqrt{n}/\ln n$, $\sigma=c^* \sqrt{n/\ln n}$, $\sigma=c^* \sqrt{n}$,
$\tau=0.4n\ln n$,$\tau=0.5n\ln n$, $\tau=0.6n\ln n$}
\end{semilogxaxis}
\end{tikzpicture}
 &
\begin{tikzpicture}[trim axis left,trim axis right]
\begin{semilogxaxis}[xlabel=$n$,yticklabels={,,}]
\addplot[thick,dgreen]          table [x index=0,y index=1] {\dtB};
\addplot[thick,red]             table [x index=0,y index=2] {\dtB};
\addplot[thick,cyan]            table [x index=0,y index=3] {\dtB};
\addplot[thick,brown]           table [x index=0,y index=4] {\dtB};
\addplot[thick,blue]            table [x index=0,y index=5] {\dtB};
                                                                
\addplot[red,dotted]            table [x index=0,y index=6] {\dtB};
\addplot[cyan,dotted]           table [x index=0,y index=7] {\dtB};
\addplot[blue,dotted]           table [x index=0,y index=8] {\dtB};

\draw [thick, draw=gray]   (axis cs: 1000,0.5) -- (axis cs: \xmax,0.5)
node[pos=1,below left,inner sep=1pt] {\rlsone};
\draw [thick, draw=gray]   (axis cs: 1000,0.40525) -- (axis cs:\xmax,0.40525) node[pos=0,above right,inner sep=1pt] {$3_{opt}$
};
\end{semilogxaxis}
\end{tikzpicture}\\
(a) $k=2$ & (b) $k=3$ \\[6pt]
\begin{tikzpicture}[trim axis right]
\begin{semilogxaxis}[xlabel=$n$,ylabel=Runtime $(/n^2)$]
\addplot[thick,dgreen]          table [x index=0,y index=1] {\dtC};
\addplot[thick,red]             table [x index=0,y index=2] {\dtC};
\addplot[thick,cyan]            table [x index=0,y index=3] {\dtC};
\addplot[thick,brown]           table [x index=0,y index=4] {\dtC};
\addplot[thick,blue]            table [x index=0,y index=5] {\dtC};
                                                                
\addplot[red,dotted]            table [x index=0,y index=6] {\dtC};
\addplot[cyan,dotted]           table [x index=0,y index=7] {\dtC};
\addplot[blue,dotted]           table [x index=0,y index=8] {\dtC};

\draw [thick, draw=gray]   (axis cs: 1000,0.5) -- (axis cs: \xmax,0.5)
node[pos=1,below left,inner sep=1pt] {\rlsone};
\draw [thick, draw=gray]   (axis cs: 1000,0.39830) -- (axis cs:\xmax,0.39830) node[pos=0,above right,inner sep=1pt] {$4_{opt}$
};

\end{semilogxaxis}
\end{tikzpicture}
& 
\begin{tikzpicture}[trim axis left,trim axis right]
\begin{semilogxaxis}[xlabel=$n$,yticklabels={,,}]
\addplot[thick,dgreen]          table [x index=0,y index=1] {\dtD};
\addplot[thick,red]             table [x index=0,y index=2] {\dtD};
\addplot[thick,cyan]            table [x index=0,y index=3] {\dtD};
\addplot[thick,brown]           table [x index=0,y index=4] {\dtD};
\addplot[thick,blue]            table [x index=0,y index=5] {\dtD};
                                                                
\addplot[red,dotted]            table [x index=0,y index=6] {\dtD};
\addplot[cyan,dotted]           table [x index=0,y index=7] {\dtD};
\addplot[blue,dotted]           table [x index=0,y index=8] {\dtD};

\draw [thick, draw=gray]   (axis cs: 1000,0.5) -- (axis cs: \xmax,0.5)
node[pos=1,below left,inner sep=1pt] {\rlsone};
\draw [thick, draw=gray]   (axis cs: 1000,0.39492) -- (axis cs:\xmax,0.39492) node[pos=0,above right,inner sep=1pt] {$5_{opt}$
};
\end{semilogxaxis}
\end{tikzpicture}\\
(c) $k=4$ & (d) $k=5$ \\[6pt]
\end{tabular}
\vspace{2mm} \ref{fig1legend}
\caption{Average number of fitness function evaluations required by the ARG and GRG hyper-heuristics using $H=\{\rlsone, \ldots, \rlsk\}$ (with parameters $\sigma$ and $\tau$ respectively) to find the \LeadingOnes optimum as the problem size $n$ increases. \rlsone and $k_{opt}$ show the best runtime achievable by using, respectively, one low-level heuristic (since \rlsone is the only low-level heuristic that can make progress at $\textsc{LO}(x)=n-1$ and thus has a finite expected runtime) or all $k$ heuristics (see Theorem~\ref{koptbestcase}).}
\label{fig:n-increase}
\end{figure}

Figure~\ref{fig:typical-run-max} shows the adaptation of $\tau$ for five individual runs for $n=10^7$ of ARG with $\sigma=\sqrt{n}/\ln n$ and $H=\{\rlsone,\rlstwo\}$, 
as well as an average over 100 runs. As expected, the learning period $\tau$
is quickly increased
into the range where the optimal heuristic produces more than $\sigma$ improvements per
$\tau$ iterations in expectation. The adapted learning period tracks the increasing waiting times for an 
improvement by the optimal heuristic in the first half of the search space, while remaining
stable in the second half where the waiting time does not change.
$\tmax$ is included to show how the parameter $\tau$ adapts as suggested within the proof of Theorem~\ref{thm:ARG-runtime}.
The average runtime of ARG over 100 runs in this setting was $\approx0.426 n^2$ (the best possible expected runtime achievable is $\approx0.423 n^2$).

\begin{figure}[t!]
\pgfplotstableread{Experiments2/fig2a.tsv}{\dtA}
\begin{tikzpicture}
\begin{axis}[
	xlabel=\% $\textsc{LO}(x)$, xmin=0, xmax=100, ymin=50, ymax=255,
	ylabel=$\tau/n$,
	scaled ticks=false,tick label style={/pgf/number format/fixed},
	mark size=1.25pt, ymajorgrids=true,
	legend pos=south east,
	y tick label style={
		/pgf/number format/.cd,fixed,fixed zerofill,precision=0,
		/tikz/.cd
	},
	width=1\linewidth,
	height=2.2in,
	legend pos = south east
]
\addplot[red]                table[x index=0,y index=2] {\dtA};
\addplot[orange]             table[x index=0,y index=3] {\dtA};
\addplot[green]              table[x index=0,y index=4] {\dtA};
\addplot[brown]              table[x index=0,y index=5] {\dtA};
\addplot[purple]             table[x index=0,y index=6] {\dtA};
\addplot[black,thick,dotted] table[x index=0,y index=1] {\dtA};

\addplot[thick,mark=none,dashed, blue] coordinates {(49,1.248168275*196) (100,1.248168275*196)};

	\draw[thick,domain=0:49,dashed,variable=\x,blue] plot ({\x},{1.248168275*196*50/(99-\x)});
      \draw[blue] (47, 240) node[left] {$\tau_{max}$};
	
\legend{Run 1,Run 2,Run 3,Run 4,Run 5,Average}
\end{axis}
\end{tikzpicture}

\caption{Adapted value of $\tau$ over time in five typical runs of ARG using $H=\{\rlsone,\rlstwo\}$ for \LeadingOnes with $\sigma=\sqrt{n}/\ln n$, $n=10^7$ and an average over 100 runs.}
\label{fig:typical-run-max}
\end{figure}

Figure~\ref{fig:average-opt-use} shows the percentage of iterations where the optimal operator is used by ARG with $\sigma=\sqrt{n}/\ln n$ for problem sizes $n=10^5$, $n=10^7$ and $n=10^9$, averaged over 100 independent runs. We divide the $n+1$ fitness values into 100 ranges and plot the percentage of iterations where the optimal operator is employed.
We see that ARG exhibits the behaviour predicted by  Theorem~\ref{thm:ARG-runtime}, already for these problem sizes. Naturally, as $n$ increases, the curves are smoother and the optimal heuristic is used more often. It is shown that each of the optimal heuristics is used most often in the regions in which they are optimal. In the border regions, the best two heuristics are used approximately half of the time each while the other heuristics are rarely used. Hence, the desired heuristic use is achieved even for larger heuristic sets.

\begin{figure}[t!]
\centering
\pgfplotsset{
	xmin=0, xmax=100,ymin=0, ymax=100,
	width=0.37\linewidth,height=1.6in,scale only axis,
	tick label style={/pgf/number format/.cd,fixed,fixed zerofill,precision=0},
	ytick={0,25,50,75,100},
	ymajorgrids=true,
	xtick placement tolerance=-0.1pt,
}
\pgfplotstableread{Experiments2/fig3a.tsv}\dtA
\pgfplotstableread{Experiments2/fig3b.tsv}\dtB
\pgfplotstableread{Experiments2/fig3c.tsv}\dtC
\pgfplotstableread{Experiments2/fig3d.tsv}\dtD
\begin{tabular}{cc}
\begin{tikzpicture}[trim axis right]
\begin{axis}[xlabel=\% $\textsc{LO}(x)$, ylabel=Operator use (\% iterations)]
\addplot[red,dashed]   table[x index=0,y index=1] {\dtA};
\addplot[blue,dashed]  table[x index=0,y index=2] {\dtA};
\addplot[red,dotted]  table[x index=0,y index=3] {\dtA};
\addplot[blue,dotted] table[x index=0,y index=4] {\dtA};
\addplot[red,thick]  table[x index=0,y index=5] {\dtA};
\addplot[blue,thick] table[x index=0,y index=6] {\dtA};
\end{axis}
\end{tikzpicture}
 &
\begin{tikzpicture}[trim axis left,trim axis right]
\begin{axis}[xlabel=\% $\textsc{LO}(x)$, yticklabels={,,}]
\addplot[red,thick]     table[x index=0,y index=7] {\dtB};
\addplot[blue,thick]    table[x index=0,y index=8] {\dtB};
\addplot[dgreen,thick]  table[x index=0,y index=9] {\dtB};
\addplot[red,dashed]    table[x index=0,y index=1] {\dtB};
\addplot[blue,dashed]   table[x index=0,y index=2] {\dtB};
\addplot[dgreen,dashed] table[x index=0,y index=3] {\dtB};
\addplot[red,dotted]    table[x index=0,y index=4] {\dtB};
\addplot[blue,dotted]   table[x index=0,y index=5] {\dtB};
\addplot[dgreen,dotted] table[x index=0,y index=6] {\dtB};

\end{axis}
\end{tikzpicture}\\
(a) $k=2$ & (b) $k=3$ \\[6pt]
\begin{tikzpicture}[trim axis right]
\begin{axis}[xlabel=\% $\textsc{LO}(x)$,ylabel=Operator use (\% iterations)]

\addplot[red,thick]     table[x index=0,y index=9]  {\dtC};
\addplot[blue,thick]    table[x index=0,y index=10] {\dtC};
\addplot[dgreen,thick]  table[x index=0,y index=11] {\dtC};
\addplot[orange,thick]  table[x index=0,y index=12] {\dtC};

\addplot[red,dashed]    table[x index=0,y index=1] {\dtC};
\addplot[blue,dashed]   table[x index=0,y index=2] {\dtC};
\addplot[dgreen,dashed] table[x index=0,y index=3] {\dtC};
\addplot[orange,dashed] table[x index=0,y index=4] {\dtC};
\addplot[red,dotted]    table[x index=0,y index=5] {\dtC};
\addplot[blue,dotted]   table[x index=0,y index=6] {\dtC};
\addplot[dgreen,dotted] table[x index=0,y index=7] {\dtC};
\addplot[orange,dotted] table[x index=0,y index=8] {\dtC};

\end{axis}
\end{tikzpicture}
& 
\begin{tikzpicture}[trim axis left,trim axis right]
\begin{axis}[xlabel=\% $\textsc{LO}(x)$, yticklabels={,,},
	legend columns=5,legend to name={fig3legend}]
\addplot[red,thick]     table[x index=0,y index=11]  {\dtD};
\addplot[blue,thick]    table[x index=0,y index=12]  {\dtD};
\addplot[dgreen,thick]  table[x index=0,y index=13]  {\dtD};
\addplot[orange,thick]  table[x index=0,y index=14]  {\dtD};
\addplot[purple,thick]  table[x index=0,y index=15]  {\dtD};

\addplot[red,dashed]     table[x index=0,y index=1]  {\dtD};
\addplot[blue,dashed]    table[x index=0,y index=2]  {\dtD};
\addplot[dgreen,dashed]  table[x index=0,y index=3]  {\dtD};
\addplot[orange,dashed]  table[x index=0,y index=4]  {\dtD};
\addplot[purple,dashed]  table[x index=0,y index=5]  {\dtD};

\addplot[red,dotted]    table[x index=0,y index=6]  {\dtD};
\addplot[blue,dotted]   table[x index=0,y index=7]  {\dtD};
\addplot[dgreen,dotted] table[x index=0,y index=8]  {\dtD};
\addplot[orange,dotted] table[x index=0,y index=9]  {\dtD};
\addplot[purple,dotted] table[x index=0,y index=10] {\dtD};
\legend{RLS$_1$, RLS$_2$, RLS$_3$, RLS$_4$, RLS$_5$}
\end{axis}
\end{tikzpicture}\\
(c) $k=4$ & (d) $k=5$ \\[6pt]
\end{tabular}\hfill \vspace{2mm} \ref{fig3legend}

\caption{Percentage of the iterations ARG using $H=\{\rlsone,\dots,\rlsk\}$ and $\sigma=\sqrt{n}/\ln n$ applies each mutation operator for, average over 100 runs for $n=10^5$ (dotted), $n=10^7$ (dashed) and $n=10^9$ (solid).}
\label{fig:average-opt-use}
\end{figure}

\section{Conclusion}
Recently it has been proven that a Random Gradient selection hyper-heuristic runs in optimal expected time for \LeadingOnes if the learning period $\tau$ is set appropriately. In this paper, we have presented an Adaptive Random Gradient (ARG) hyper-heuristic that automatically adjusts the learning period throughout the run. ARG uses an innovative $1-o(1)$ self-adjusting rule that strives to adapt the learning period such that a failure occurs approximately every $\omega(1)$ successes. The novelty consists of seeking many successes before a failure in contrast to a success after many failures of traditional adaptive algorithms~\citep{KernEtAl2004,DoerrDoerr2015}.

We have rigorously proved that ARG optimises \LeadingOnes in the best expected runtime achievable using the low-level heuristic set  $H=\{\rlsone,\dots,\rlsk\}$ for any constant $k=\Theta(1)$, up to lower order terms. Our proof also shows that with probability $1-o(1)$, only a fraction of $o(1)$ of the iterations use any non-optimal heuristic. Hence, the optimal heuristic is used most of the time as desired. We believe that also a non-constant number of heuristics (e.g., a logarithmic number) could be handled by our algorithm, again giving optimal performance apart from lower-order terms. Furthermore, ARG outperforms any unbiased (1+1) black box algorithm using any strict subset of $H$.

We have complemented the theory with experiments for practical problem sizes. The results show that the parameter $\sigma$, indicating the ratio of successes to failures which maintains a stable $\tau$ value, is fairly robust. If it is set within the range of values predicted by our theoretical analysis, then ARG outperforms the best hyper-heuristics with fixed $\tau$ reported in the literature.

Recently, \cite{LissovoiEtAl2020AAAI} have shown that the adaption of the learning period in ARG is provably useful for two function classes with different characteristics. They proved that ARG has optimal asymptotic runtime for the \textsc{Ridge} function class (where the HH must learn to use \rlsone throughout the run) and the \textsc{OneMax} function class (where different low-level heuristics are preferable in different areas of the search space).

Future work should evaluate the performance of ARG for a wider range of problems including ones from combinatorial optimisation and real-world applications. Furthermore, implementing the adaptive framework used for ARG into other hyper-heuristic approaches should be considered. For example, the Move Acceptance Hyper-heuristic, which has shown effective performance for some multi-modal problems with local and global mutation operators \citep{LissovoiEtAl2023,DoerrEtAl2023,DoerrLutzeyer2024,DoerrIJCAI2025} contains a fixed parameter which controls the tradeoff between elitist and non-elitism moves; automatically adapting this parameter could reduce the performance gap between problems with different landscapes. 

\paragraph*{Acknowledgements}
{\footnotesize This work was initiated in May 2017 while Pietro S.~Oliveto was a \emph{chercheur invit\'e} at \'Ecole Polytechnique. This work was supported by a public grant as part of the Investissement d'avenir project, reference ANR-11-LABX-0056-LMH, LabEx LMH, in a joint call with Gaspard Monge Program for optimization, operations research and their interactions with data sciences and by EPSRC under grant EP/M004252/1.

We are grateful to Andrei Lissovoi for his help in developing the original conference paper.}

\clearpage

\bibliographystyle{apalike}
\bibliography{references}

}
\end{document}